\newcommand{\mathbi}[1]{\boldsymbol{#1}}
\newcommand{\toolname}{xNeuSM\xspace}
\newcommand{\sstitle}[1]{\smallskip\noindent\textbf{#1.\/}}
\newtheorem{dfn}{Definition}
\newtheorem{problem}{Problem}
\ShortHeadings{\toolname}{Explainable Neural Subgraph Matching}
\begin{document}

\title{xNeuSM: Explainable Neural Subgraph Matching with Graph Learnable Multi-hop Attention Networks}

\author{\name Duc Q. Nguyen \email nqduc@hcmut.edu.vn \\
       \addr Ho Chi Minh City University of Technology (HCMUT)\\
       Vietnam National University Ho Chi Minh City\\
       Ho Chi Minh City, Vietnam
       \AND
       \name {Thanh Toan Nguyen\footnote{Corresponding author}} \email thanhtoan.nguyen@griffithuni.edu.au \\
       \addr Griffith University\\
       Queensland, Australia
       \AND
       \name Tho Quan \email qttho@hcmut.edu.vn \\
       \addr Ho Chi Minh City University of Technology (HCMUT)\\
       Vietnam National University Ho Chi Minh City\\
       Ho Chi Minh City, Vietnam
       \AND
}

\editor{Daniel Hsu}
\maketitle

\begin{abstract}
Subgraph matching is a challenging problem with a wide range of applications in database systems, biochemistry, and cognitive science. It involves determining whether a given query graph is present within a larger target graph. Traditional graph-matching algorithms provide precise results but face challenges in large graph instances due to the NP-complete problem, limiting their practical applicability. In contrast, recent neural network-based approximations offer more scalable solutions, but often lack interpretable node correspondences. To address these limitations, this article presents \toolname: E\textbf{x}plainable \textbf{Neu}ral \textbf{S}ubgraph \textbf{M}atching which introduces Graph Learnable Multi-hop Attention Networks (GLeMA) that adaptively learns the parameters governing the attention factor decay for each node across hops rather than relying on fixed hyperparameters. We provide a theoretical analysis establishing error bounds for GLeMA's approximation of multi-hop attention as a function of the number of hops. Additionally, we prove that learning distinct attention decay factors for each node leads to a correct approximation of multi-hop attention. Empirical evaluation on real-world datasets shows that xNeuSM achieves substantial improvements in prediction accuracy of up to 34\% compared to approximate baselines and, notably, at least a seven-fold faster query time than exact algorithms. The source code of our implementation is available at \url{https://github.com/martinakaduc/xNeuSM}.
\end{abstract}

\begin{keywords}
  subgraph matching, subgraph isomorphism, graph neural networks, explainability, learnable multi-hop attention
\end{keywords}

\section{Introduction} 
\label{sec:introduction}

In recent decades, a significant focus has been developing practical solutions for NP-hard graph problems. This wave of interest has been motivated by the abundance of diverse graph data in the public domain~\citep{sahu2020ubiquity}. One prominent challenge in this domain is tackling large graphs, and a vital aspect of this is addressing the issue of \emph{subgraph matching}. Essentially, subgraph isomorphism or subgraph matching involves determining whether a given query graph is isomorphic to a subgraph within a target graph. Despite the inherent NP-completeness, this problem holds paramount significance, as it applies to various domains, including social network analysis~\citep{fan2012graph}, bioinformatics~\citep{cannataro2012data}, and graph retrieval~\citep{roy2022interpretable}. This acceleration has driven researchers to devise scalable and efficient algorithms tailored to analyze extensive graphs like those found in social and biological networks. Throughout the past decades, numerous works have been developed, yielding a spectrum of practical solutions encompassing various algorithms~\citep{shang2008taming, han2013turboiso, bi2016efficient, bhattarai2019ceci, han2019efficient, lou2020neural}.

 \begin{figure}[!ht]
  \centering
  \includegraphics[width=0.8\linewidth]{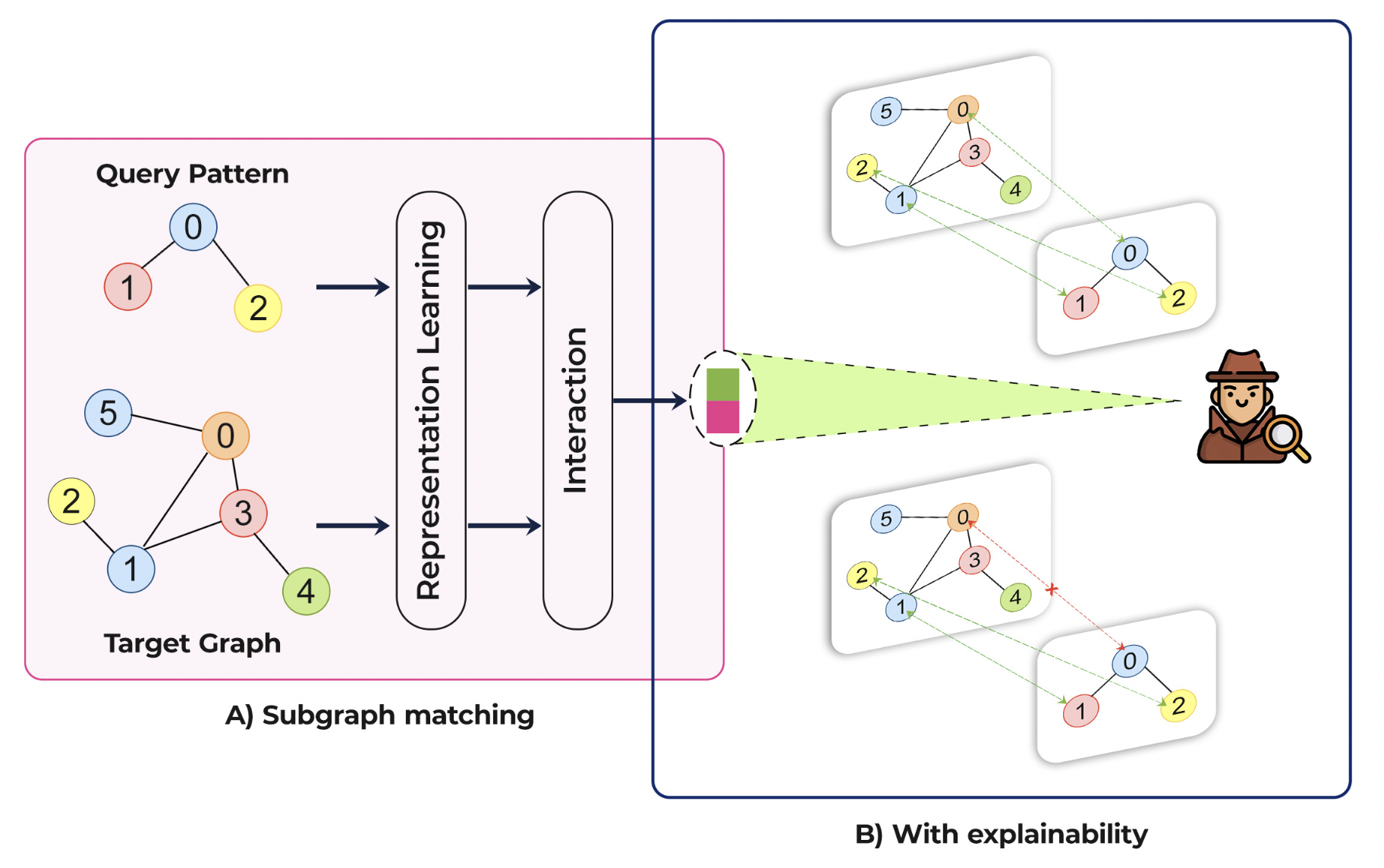}
  \vspace{-0.2cm}
  \caption{A typical subgraph matching result.}
  \label{fig:motivation}
\end{figure}

The conventional approaches~\citep{he2008graphs, han2013turboiso} rely on exact combinatorial search algorithms. While exact computation of subgraph matching yields precise results in both \emph{subgraph matching} (\autoref{fig:motivation}-A) and \emph{matching explanation} (finding node-to-node correspondences) among nodes across the two graphs (\autoref{fig:motivation}-B), they confront with limited \emph{scalability} when applied to larger query pattern sizes, which is attributed to the NP-complete nature of the problem. Recent efforts~\citep{bi2016efficient, bhattarai2019ceci, han2019efficient} aimed to enhance scalability by devising efficient matching orders and formulating powerful filtering strategies to shrink the number of candidates within the target graph. Although these efforts enable matching to be extended to larger target graphs, the query size remains restricted to just a few tens of nodes; this scalability level falls below what is needed for practical applications.

Subgraph matching is paramount due to its wide-ranging applications, but it poses a substantial challenge. In response to this challenge, neural-based approaches~\citep{xu2019cross, lou2020neural} have been proposed. These approaches aim to strike a balance between speed and accuracy. They demonstrate that by training a matching function to approximate the matching metric, it becomes feasible to identify candidate matches for a query pattern more rapidly than traditional combinatorial methods. The key innovation in these approaches lies in the use of graph neural networks (GNNs) to learn the matching function. However, these learning algorithms heavily rely on first-order dependencies within each layer of the GNN architecture, as highlighted in~\citep{lou2020neural}. This implies that the receptive field of a single GNN layer is limited to one-hop network neighbours, which are immediate neighbours in the graph. Nevertheless, recent research has revealed that data obtained from various complex systems may exhibit dependencies that extend beyond the first order, going as far as fifth-order dependencies~\citep{xu2016representing}. This contrasts with the earlier assumption of solely first-order network relationships. Oversimplifying the assumption of first-order network dependencies may lead to neglecting the generalizability of these methods in fully capturing patterns of varying sizes. Consequently, this oversimplification can result in a significant drop in performance across different domains~\citep{xu2019cross, lou2020neural}.


Balancing efficiency (\textbf{O.1}) and explanation (\textbf{O.2}) in a subgraph matching algorithm are desirable goals. However, developing a learning-based approach that achieves both objectives poses a significant challenge (see~\autoref{sec:solution_design} for a detailed discussion). Inspired by the recent success of Graph Multi-hop Attention Networks (GMA)~\citep{wang2020multi}, we have adopted the concept of GMA to address these two objectives simultaneously. Unlike existing multi-hop attention mechanisms in GMA that rely on a fixed attention decay factor for all nodes, this work introduces an innovative variant capable of adaptively learning this factor in a node-specific manner. This variant, termed Graph Learnable Multi-hop Attention Networks (GLeMa), parameterizes distinct attention decay factors for each node to govern their contributions across neighbourhoods during multi-hop message passing. Furthermore, we provide a theoretical analysis establishing approximation error bounds for GLeMa's modelling of multi-hop attention as a function of the number of hops. Additionally, we formally prove that learning node-specific attention decays enables GLeMa to capture relational patterns in graph-structured data accurately.

By incorporating the learnable multi-hop attention mechanism, GLeMA achieves a commendable level of generalization within the data graph while maintaining efficiency (achieving \textbf{O.1}). However, simultaneously achieving \textbf{O.1} and \textbf{O.2} is still challenging. Unlike previous neural-based approaches~\citep{lou2020neural} that directly learn from separate adjacency matrices for the pattern and target graphs, our newly devised unified proxy inputs facilitate the comprehensive capture of both intra- and inter-relations between the pattern and the target graph. This, in turn, enhances the explanation regarding explicit node alignment (achieving \textbf{O.2}). Additionally, we optimize the tasks of subgraph matching and matching explanation concurrently in an end-to-end multi-task manner. This approach leads to a mutually reinforcing synergy between both tasks, contributing to the overall effectiveness and efficiency of the framework. \emph{To the best of our knowledge, none of the existing learning-based methods have been able to accomplish both of these objectives simultaneously}.

We named our approach \toolname, E\textbf{\underline{x}}plainable \textbf{\underline{Neu}}ral \textbf{\underline{S}}ubgraph \textbf{\underline{M}}atching with Graph Learnable Multi-hop Attention Networks. Our contributions are stated as follows:

\begin{itemize}
    \item \emph{Graph Learnable Multi-hop Attention Networks:} We introduce GLeMa, which possesses the ability to directly learn node-specific attention decay factors from the data. This adaptable mechanism mitigates biases that could emerge from fixed attention decay mechanisms applicable universally across nodes, thus averting potential suboptimal outcomes. Our approach guarantees that the model's decisions are data-driven, steering clear of potential influences stemming from suboptimal parameter selections.
    
    \item \emph{Theoretical justifications for the approximation error of multi-hop attention and the correctness of node-specific attention decay factors:} In addition to the extensive empirical results, we conduct a theoretical analysis to approximate the error in multi-hop attention computation. Furthermore, we offer theoretical proof regarding the learning of attention decay factors specific to nodes. Our demonstration illustrates that the utilization of distinct decay factors remains consistent with the approximation of multi-hop attention and represents a generalized version of the previous universal attention decay factor utilized in earlier GMA models.

    \item \emph{Multi-task Learning:} We optimize both the subgraph matching and matching explanation tasks simultaneously within an end-to-end multi-task learning framework. This strategy fosters a mutually reinforcing synergy between these tasks, ultimately enhancing the overall effectiveness and performance of our framework.
\end{itemize}

The subsequent sections delineate the organization of this paper. Section \ref{sec:background} furnishes the foundational background of the study, setting the contextual framework. Section \ref{sec:approach} introduces our framework and succinctly outlines its constituent components. In Section \ref{sec:input}, we propose a method to construct a joint representation for both the pattern and target graphs, emphasizing the facilitation of intra-graph and inter-graph relationships within these newly formulated representations. Section \ref{sec:embedding} elaborates extensively on our proposed GLeMA, explicitly designed for learning graph representation and inter-graph interactions. In Section \ref{sec:multi_task}, we elucidate the aggregation of node embeddings to address subgraph matching and matching explanation tasks concurrently through a novel objective function. The evaluation of our approach using public datasets across various domains is presented in Section \ref{sec:experiment}. Notably, our proposed framework demonstrates a remarkable improvement, achieving more than a tenfold increase in subgraph matching speed compared to the fastest baseline and showcasing a notable enhancement of 27\% in accuracy and 34\% in F1-score compared to NeuroMatch, the state-of-the-art approximation approach, on the COX2 dataset. Additionally, our results exhibit comparability with exact methods. Section \ref{sec:related} delves into a discussion of related works, contextualizing our contributions within the existing literature. Conversely, Section \ref{sec:conclusion} culminates the paper with a concise summary.

\section{Background}
\label{sec:background}

In this section, we establish precise notations necessary preliminaries and formally define our targeted problem involving subgraph matching and matching explanation.

\subsection{Preliminaries}

In this study, we centre on solving subgraph matching and matching explanation problems on labelled, undirected, and connected graphs. Nevertheless, our proposed framework readily accommodates extensions to encompass directed graphs. The notations employed throughout this study are succinctly outlined and summarized in~\autoref{tlb:symbols}.

\begin{table}[!ht]
\centering
\caption{Summary of notation used}
\label{tlb:symbols}
    \begin{tabular}{c l}
    \toprule
    \textbf{Symbol} & \textbf{Definition} \\
    \midrule
            $\mathcal{D}$ & a set of data graphs\\
            $\mathcal{T}$ & the target graph ($T \in D$)\\
            $\mathcal{P}$ & the query pattern\\
            $\mathbi{A}^{in}$ & proxy intra-graph adjacency matrix\\
            $\mathbi{A}^{cr}$ & proxy cross-graph adjacency matrix\\
            $x^\ell_i$  & embedding of node $v_i$ at $\ell$-layer\\
            $\mathbi{X}^\ell$ & all node embeddings at $\ell$-layer\\
            $\mathcal{A}^{(1)}$ & $1$-hop attention matrix \\
            $\mathcal{A}$ & attention diffusion matrix\\
            $\mathcal{A}^{(K)}$ & approximate $K$-hop attention diffusion matrix \\
    \bottomrule
    \end{tabular}
\end{table}


Subsequently, we present formal definitions for a labelled, undirected, connected graph in Definition~\ref{def:labelled_undirected_graph} and extend this to a directed graph in Definition~\ref{def:labelled_directed_graph}. Building upon these definitions, we introduce the problem of induced labelled subgraph isomorphism in Definition~\ref{def:isomorphism}. 
In this study, we focus on induced subgraph isomorphism. After all, it is more difficult to solve than the non-induced one because it has fewer polynomial-time solvable exceptional cases~\citep{SYSLO198291}.

\begin{dfn}[\textbf{Labelled Undirected Connected Graph}]
\label{def:labelled_undirected_graph}
A labelled undirected connected graph is a graph represented with a 3-tuple $\mathcal{G}=(V,E,l)$ where
\begin{enumerate}
    \item $V$ is a set of nodes, 
    \item $E \subseteq [V]^2$ is a set of edges $(u, v)$, where $u,v \in V$
    \item $\forall v \in V, \text{deg}(v) \geq 1$
    \item $l: V \rightarrow \Sigma$ is a labelling function and $\Sigma$ is a set of node labels.
\end{enumerate}
\end{dfn}

\begin{dfn}[\textbf{Labelled Directed Connected Graph}]
\label{def:labelled_directed_graph}
A labelled directed connected graph is a graph represented with a 3-tuple $\mathcal{G}=(V, E,l)$ where
\begin{enumerate}
    \item $V$ is a set of nodes, 
    \item $E \subseteq [V]^2$ is a set of edges $(u, v)$, where $u$ is tail node, $v$ is head node and $u,v \in V$
    \item $\forall v \in V, (\text{deg}_{in}(v) \geq 1) \lor (\text{deg}_{out}(v) \geq 1)$
    \item $l: V \rightarrow \Sigma$ is a labelling function and $\Sigma$ is a set of node labels
\end{enumerate}
\end{dfn}


\begin{dfn}[\textbf{Induced Subgraph Isomorphism}]
\label{def:isomorphism}
Given two graphs $\mathcal{P} = (V_{\mathcal{P}}, E_{\mathcal{P}}, l_{\mathcal{P}})$ and
$\mathcal{T} = (V_{\mathcal{T}}, E_{\mathcal{T}}, l_{\mathcal{T}})$, $\mathcal{P}$ is considered as \emph{induced subgraph isomorphic} to $\mathcal{T}$ if there exists a label-preserving bijection $f:
V_{\mathcal{P}} \rightarrow V_{\mathcal{T}}$:
\begin{enumerate}
    \item $\forall \ v \in V_{\mathcal{P}}: l_{\mathcal{P}}(v) = l_{\mathcal{T}}(f(v))$, and
    \item $\forall \ (v_1,v_2) \in E_{\mathcal{P}}: (f(v_1), f(v_2)) \in 
    E_{\mathcal{T}}$, and
    \item $\forall v_1, v_2 \in V_{\mathcal{P}}: (f(v_1), f(v_2)) \in E_{\mathcal{T}} \Rightarrow (v_1, v_2) \in E_{\mathcal{P}}$.
\end{enumerate}
\end{dfn}

\subsection{Problem statement}


Our study concentrates on resolving two pivotal problems: subgraph matching and matching explanation, formally delineated in Problem \ref{prob:2} and Problem \ref{prob:3}, respectively.

\begin{problem}[\textbf{Subgraph Matching}]
\label{prob:2}
Given a target graph $\mathcal{T}$ and a query pattern $\mathcal{P}$, both labelled connected graphs, the subgraph matching problem aims to determine whether $\mathcal{P}$ is isomorphic to an induced subgraph of $\mathcal{T}$ or not.
\end{problem}

\begin{problem}[\textbf{Matching Explanation}]
\label{prob:3}
Given two graphs $\mathcal{P}=(V_{\mathcal{P}},E_{\mathcal{P}},l_{\mathcal{P}})$ and $\mathcal{T}=(V_{\mathcal{T}},E_{\mathcal{T}},l_{\mathcal{T}})$ where $\mathcal{P}$ is a known induced subgraph of $\mathcal{T}$, the matching explanation problem aims to accurately determine a one-to-one mapping that captures the node correspondences between $\mathcal{P}$ and $\mathcal{T}$. The objective is to establish precise relationships between nodes in $\mathcal{P}$ and their corresponding counterparts in $\mathcal{T}$.
\end{problem}


\section{Overview approach}
\label{sec:approach}

\subsection{Design principles}
\label{sec:solution_design}
\toolname benefits from a crucial advantage in terms of efficiency, thanks to the inherent characteristics of neural network computation. However, when it comes to effectiveness, it is essential to carefully design the graph neural network architecture to meet the following three properties, in addition to the common ones like approximate accuracy and efficiency:

\begin{itemize}

\item \emph{\textbf{(R1) Explanibility.}} Ideally, a subgraph matching framework should be capable of identifying the pattern's presence and providing approximate \emph{alignment witnesses}. Given that no existing neural-based approaches offer these characteristics, we prioritize this feature as the utmost property due to its critical importance in numerous real-world applications.  

\item \emph{\textbf{(R2) High-order dependency.}} Conventional network representations, which implicitly assume the Markov property (first-order dependency), can swiftly become constraining. The oversimplification inherent in first-order networks may disregard scalability, particularly pattern size. Recent studies have demonstrated that data from numerous complex systems may exhibit dependencies as high as fifth-order~\citep{xu2016representing}. As we strive for a scalable solution in subgraph matching with explicability, including high-order dependency representation emerges as an essential design principle.

\item \emph{\textbf{(R3) Multi-task with configurability.}} The model should demonstrate adaptability to various matching metrics by fine-tuning its parameters through training. This is essential because, in certain scenarios, closely matched patterns~\citep{sussman2019matched}---those with a matching score surpassing a predetermined threshold---hold even greater significance than exact matches. Take, for instance, vaccine development, where a candidate closely matching the disease-to-be is far more critical than an exact match to the disease pattern. Such a closely matched candidate aids in early disease response. Hence, there may arise situations necessitating an emphasis on configuring the model to prioritize emergency scenarios that align better with human intuition~\citep{jimenez2020drug}.

\end{itemize}

\subsection{The challenges}
To accomplish these objectives, we must address the ensuing the following challenges:

\begin{itemize}
\item \emph{\textbf{Explainable adaptivity.}} The neural-based approach for the subgraph matching problem employs coarse-grained embeddings of entire graphs to approximate graph-level similarities (\textbf{R1}). Achieving an explicitly explainable alignment between nodes necessitates fine-grained annotations between the two graphs, adding a layer of supervision to the training process. However, preparing such training data is highly labour-intensive, resulting in computationally inefficient procedures.

\item \emph{\textbf{High computational complexity with Graph Multi-hop Attention.}} Addressing (\textbf{R2}) necessitates the effective integration of high-order dependencies, a task that is far from trivial. Elevating the orders of dependency can impose a computational burden on the model. As a result, many existing works frequently maintain the dependency fixed at the second order. Striking a balance between efficiency and capturing a large receptive field proves to be a challenging attempt.  

\item \emph{\textbf{Multi-objective optimization.}} Previous studies~\citep{scarselli2008graph,bai2018convolutional,zhang2019deep,lou2020neural} have employed neural networks to characterize the similarity function. These networks operate on graph-level embeddings or sets of node embeddings. Despite their competitive performance in approximating similarity and facilitating retrieval tasks compared to traditional computations, integrating an extra optimization objective for analyzing node-to-node mappings between query-target graph pairs (\textbf{R3}) presents challenges. The development of a neural model capable of seamlessly incorporating new objectives like the aforementioned one poses a significant challenge in architectural design.

\end{itemize}

\subsection{The approach}
We present a comprehensive overview of our \toolname framework in Figure \ref{fig:xSM_architecture}, delineating three primary stages as follows.

\sstitle{Input representation} Departing from previous neural-based methods~\citep{lou2020neural} that directly learn from separate adjacency matrices of pattern and target graphs, our newly devised unified proxy inputs enable the capture of cross-graph relations (\textbf{R1}). These inputs also bolster the learning process concurrently. Further elucidation on this aspect can be found in Section~\ref{sec:input}.
    
\sstitle{Graph Learnable Multi-hop Attention Networks} Our novel approach employs a specialized graph neural network, GLeMA, to extract higher-order dependencies. GLeMA facilitates the effective representation of inter- and intra-interactions between pattern and target graphs by a learnable multi-hop attention mechanism, enabling simultaneous learning of those interactions. This integration of high-order dependencies ensures scalability (\textbf{R2}), particularly with larger patterns. For further details, refer to Section~\ref{sec:embedding}.
    
\sstitle{Multi-task optimization} In this third stage, we aggregate node embeddings while concurrently addressing two tasks: \emph{subgraph prediction} and \emph{matching explanation}. Both tasks utilize the features learned in the preceding stage. Additionally, we introduce a novel objective function aimed at optimizing both tasks simultaneously (\textbf{R3}). Section~\ref{sec:multi_task} presents a detailed exploration of this stage.

\begin{figure}[!ht]
    \centering
    \includegraphics[width=\textwidth]{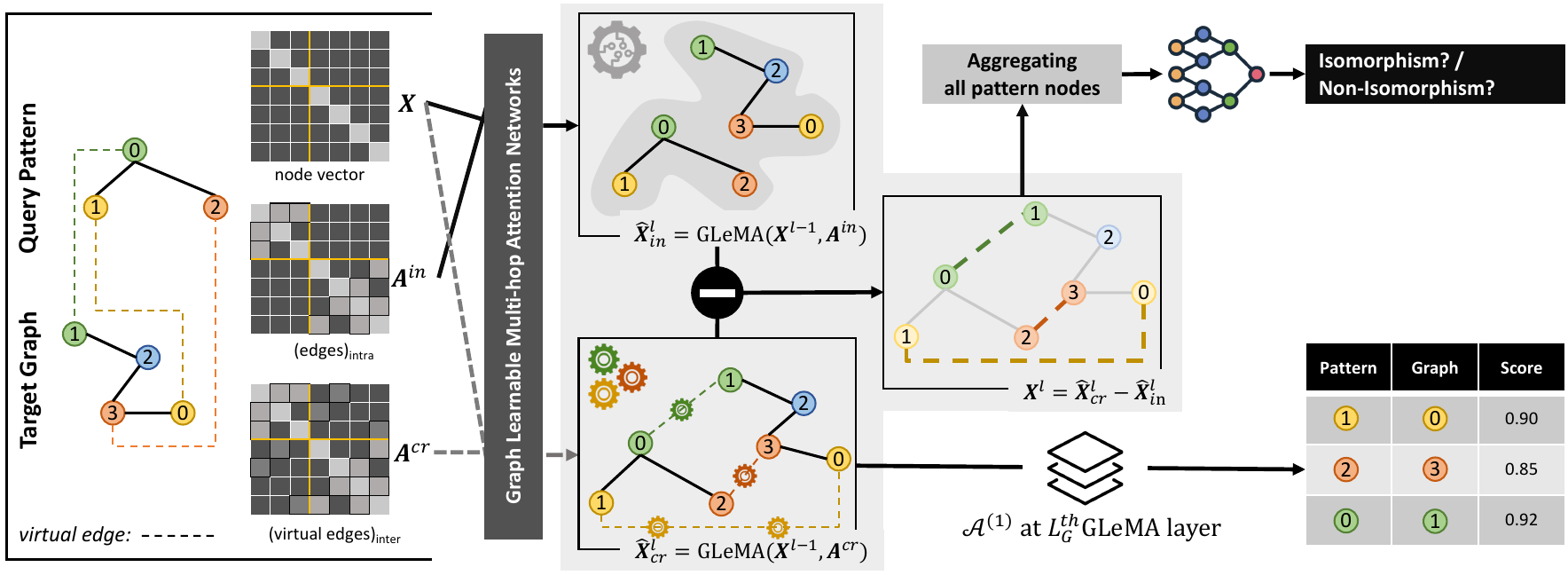}
    \caption{Overview of \toolname framework}
    \label{fig:xSM_architecture}
\end{figure}

\section{Input representation}
\label{sec:input}

Initially, the initialization of the model necessitates the preparation of input data. In the context of the subgraph isomorphism problem, the input comprises a subgraph (a pattern) and a larger graph (a target). Sets of nodes and edges conventionally characterize both patterns and targets. Toward our problem, we consider the pattern as $\mathcal{P} = \{V_{\mathcal{P}}, E_{\mathcal{P}}, l_{\mathcal{P}}\}$ and the target as $\mathcal{T} = \{V_{\mathcal{T}}, E_{\mathcal{T}}, l_{\mathcal{T}}\}$ where $V, E$ are the sets of nodes and edges respectively; $l: V \to T_V$ is the labelling function. Subsequently, the formulation of input for our proposed model is undertaken, encompassing a collection of node feature vectors denoted as $x$, the primary adjacency matrix represented by $\mathbi{A}^{in}$,  and the secondary adjacency matrix represented by $\mathbi{A}^{cr}$. Each node within the pattern or target is encoded as a one-hot vector of $2T_V$-dimensions, where $T_V = T_{V_{\mathcal{P}}} \cup T_{V_{\mathcal{T}}}$ stands for the maximum count of distinct node labels. The former $T_V$ dimensions are allocated for the pattern, while the remaining dimensions are designated for the target graph. This partitioning of pattern and target node features facilitates distinct embeddings for pattern and target nodes, thereby enhancing the quality of the mapping performance. Following this, the amalgamation of all node vectors culminates in forming the collective input set denoted as $\mathbi{X}$. The adjacency matrix $\mathbi{A}^{in}$ is created by flagging intra-graph edges, signifying the absence of edges connecting the pattern and target nodes. Conversely, the $\mathbi{A}^{cr}$ matrix considers a ``virtual'' edge connecting a pattern node and a target node in instances where they share identical labels. The mathematical definitions for $\mathbi{X}$, $\mathbi{A}^{in}$ and $\mathbi{A}^{cr}$ are formally expressed in equations (\ref{equ:gat_x_matrix}), (\ref{equ:gat_a1_matrix}), and (\ref{equ:gat_a2_matrix}), respectively.

\begin{align}
    \label{equ:gat_x_matrix}
    \mathbi{X} &= \{\vec{x}_1, \vec{x}_2, \dots, \vec{x}_{|V_{\mathcal{P}}|}, \vec{x}_{|V_{\mathcal{P}}| + 1}, \dots, \vec{x}_{|V_{\mathcal{P}}| + |V_{\mathcal{T}}|}\}\ \text{with}\ \vec{x}_i \in \mathbb{R}^{2T_V}\\
    \label{equ:gat_a1_matrix}
    \mathbi{A}^{in}_{ij} &= 
        \begin{cases}
            1 & \text{if there is an indirect edge or}\\
            & \text{a direct edge that connects $j$ to $i$}\\
            0 & \text{otherwise}
        \end{cases}\\
    \label{equ:gat_a2_matrix}
    \mathbi{A}^{cr}_{ij} &= 
        \begin{cases}
            \mathbi{A}^{in}_{ij} & \text{if $i, j \in \mathcal{P}$ or $i, j \in \mathcal{T}$}\\
            1 & \text{if $l(i) = l(j)$ and $i \in \mathcal{P}$ and $j \in \mathcal{T}$,}\\
            & \text{or if $l(i) = l(j)$ and $i \in \mathcal{T}$ and $j \in \mathcal{P}$}\\
            0 & \text{otherwise}
        \end{cases}
\end{align}



\section{Graph Learnable Multi-hop Attention Networks}
\label{sec:embedding}

In this section, we initially outline the process of extracting node features for the learning separation task, as detailed in Section~\ref{sec:node_features}. Following this, we introduce the workings of a single layer of the Learnable Multi-hop Attention mechanism in Section~\ref{sec:one_layer}. Subsequently, we discuss incorporating multiple layers of the attention mechanism to facilitate learning higher dependencies within the networks, as presented in Section~\ref{sec:multi_layer}.

\subsection{Extracting node features}
\label{sec:node_features}

In this study, we proposed a Graph Learnable Multi-hop Attention layer denoted as $\text{GLeMa}(\cdot)$, which utilizes Graph Attention Networks and a Learnable Multi-hop Attention mechanism. This approach facilitates the acquisition of holistic structural information of both targets and patterns. Assuming that we are applying this layer on an abstract graph $\mathcal{G} = \{V, E, l \}$, we present this graph with $(\mathbi{X}, \mathbi{A})$ where $\mathbi{X} \in \mathbb{R}^{|V| \times F}$ is the set of node features and $E$ is the set of edges. We formally denote the output of our proposed layer as $\widehat{\mathbi{X}} = \text{GLeMa}(\mathbi{X}, \mathbi{A})$. The input to our GLeMa layer encompasses a composite of node features, denoted as $\mathbi{X}$, and an adjacency matrix, denoted as $\mathbi{A}$, as described in equations \eqref{equ:gat_x} and \eqref{equ:gat_a}, respectively.

\begin{equation}
    \label{equ:gat_x}
    \mathbi{X} = \{\vec{x}_1, \vec{x}_2, \dots, \vec{x}_{|V|}\},\ \vec{x_i} \in \mathbb{R}^F,
\end{equation}
where $F$ is the number of features.
\begin{equation}
    \label{equ:gat_a}
    \mathbi{A}_{ij} = 
        \begin{cases}
            1 & \text{if there is an edge that connects $j$ to $i$}\\
            0 & \text{otherwise}
        \end{cases}
\end{equation}

Subsequently, node feature vectors are projected into embedding space by a linear transformation $\vec{x}{'}_i = \mathbi{W}_h \vec{x}_i$, $\vec{x}{'}_i \in \mathbb{R}^{F{'}}$, where $\mathbi{W}_h \in \mathbb{R}^{F{'} \times F}$ is a learnable weight matrix. Then, we calculate an attention coefficient for each pair of nodes by using Luong's attention \citep{luong-etal-2015-effective} as in \eqref{equ:attn_coef}.
\begin{equation}
    \label{equ:attn_coef}
    e_{ij} = 
        \begin{cases}
            \delta (\vec{x}{'}^T_i \mathbi{W}_e \vec{x}{'}_j) & \text{a directed edge $j$ to $i$},\\
            \delta (\vec{x}{'}^T_i \mathbi{W}_e \vec{x}{'}_j + \vec{x}{'}^T_j \mathbi{W}_e \vec{x}{'}_i) & \text{ an indirected edge $j$ to $i$},
        \end{cases}
\end{equation}


where $\delta$ is a non-linear activation function and $\mathbi{W}_e \in \mathbb{R}^{F{'} \times F{'}}$ is a learnable matrix. Subsequently, the normalization process involves subjecting all attention coefficients to the softmax function, resulting in the creation of the 1-hop attention matrix denoted as $\mathcal{A}^{(1)}$. Within this normalization procedure, the concept of \textit{masked attention} is incorporated, wherein solely the nodes $j \in \mathcal{N}_i$ are considered for the normalization operation. Here, $\mathcal{N}_i$ represents the set of neighbouring nodes of node $i$. Furthermore, to rigorously eliminate the influence of non-neighbour nodes, attention values normalized between two nodes $i$ and $j$ are replaced with zeros in instances where no edge is connecting node $i$ to node $j$. The mathematical framework for normalizing attention coefficients is articulated in \eqref{equ:attn_coef_norm}.

\begin{equation}
    \label{equ:attn_coef_norm}
    \begin{cases}
    \mathcal{A}^{(1)} &= \{ a^{(1)}_{ij} | i,j = \overline{1,|V|} \}\\
    a^{(1)}_{ij} &= \text{softmax}_j(e_{ij}) \mathbi{A}_{ij} = \frac{\text{exp}(e_{ij})}{\sum_{n \in \mathcal{N}_i} \text{exp}(e_{in})} \mathbi{A}_{ij}
    \end{cases}
\end{equation}

Upon obtaining the 1-hop attention matrix, we derive the attention diffusion matrix $\mathcal{A}$ following the concept of multi-hop mechanism in GNNs \citep{ijcai2021p425}. Formally, the matrix $\mathcal{A}$ is defined in \eqref{equ:attn_diff_mth}.

\begin{equation}
    \label{equ:attn_diff_mth}
    \begin{cases}
    (\mathcal{A}^{(1)})^{0} &= \mathbi{I}\\
    \mathcal{A} &= \sum_{k=0}^{\infty} \theta_k (\mathcal{A}^{(1)})^{k}\ \text{where}\ \sum_{k=0}^{\infty} \theta_k = 1\ \text{and}\ \theta_k > 0
    \end{cases}
\end{equation}




In \eqref{equ:attn_diff}, the parameter $\theta_k$ represents the attention decay factor, satisfying the condition $\theta_k > \theta_{k+1}$ to ensure a progressive reduction in importance for more distant nodes. Subsequently, for each node $i$, weighted summations are conducted between itself and other nodes to create a new feature vector for $i^{th}$ node, employing the multi-hop attention matrix.

Furthermore, the application of multi-head attention \citep{velickovic2018graph,ijcai2021p425} is executed to derive diverse feature representations from various distinct perspectives. The resultant vectors generated through multi-head attention are subsequently concatenated to yield the ultimate refined feature vectors for nodes. The formulation delineated in \eqref{equ:final_output} elucidates the process of generating the ultimate output within this layer.

\begin{equation}
    \label{equ:final_output}
    \mathbi{\widehat{X}} = \left( \bigparallel_{h=1}^{H} \delta \left( \mathcal{A}_{h} \mathbi{X{'}}_{h} \right) \right) \mathbi{W}_o\ \text{with}\ \mathbi{W}_o \in \mathbb{R}^{HF{'} \times F{'}}.
\end{equation}
        
In \eqref{equ:final_output}, the symbol $\mathbi{\widehat{X}}$ represents the collection of resultant node feature vectors, where $H$ signifies the count of attention heads employed in the multi-head attention mechanism. The term $\mathcal{A}_{h}$ denotes the multi-hop attention matrix associated with the $h$-th attention head. Correspondingly, $\mathbi{X{'}}_{h}$ designates the matrix representing projected node features for the $h$-th attention head, while $\mathbi{W}_o$ stands for a matrix of learnable weights.


\subsection{Learnable Multi-hop Attention mechanism}
\label{sec:one_layer}

The subsequent challenges confronting our research pertaining to (i) the elevated computational intricacies involved in computing $\mathcal{A}$ due to matrix multiplication \citep{gasteiger2018combining}, as well as (ii) the judicious selection of the suitable values for $\theta_k$, which significantly influences the augmentation or attenuation of the model performance \citep{gasteiger2018combining}.

\paragraph{\textbf{Reducing multi-hop attention matrix computation complexity}} In this study, following the methodology outlined in the previous GMA work \citep{ijcai2021p425}, we adopt the geometric distribution to determine $\theta_k$, wherein we choose $\theta_k = \alpha(1-\alpha)^k, \alpha \in (0,1)$ represents the teleport probability. Consequently, an approximation for $\mathcal{A} \mathbi{X{'}}$ is achieved via the utilization of equation \eqref{equ:attn_diff_approx}.

\begin{equation}
    \label{equ:attn_diff_approx}
    \begin{cases}
        \mathbi{Z}^{(0)} &= \mathbi{X{'}}\\
        \mathbi{Z}^{(k)} &= (1 - \alpha) \mathcal{A}^{(1)} \mathbi{Z}^{(k-1)} + \alpha \mathbi{Z}^{(0)},\ k = \overline{1,K}
    \end{cases}
\end{equation}

\begin{proposition}
$lim_{K \to \infty} \mathbi{Z}^{(K)} = \mathcal{A} \mathbi{X{'}}$
\end{proposition}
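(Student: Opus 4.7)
The plan is to unroll the recurrence into a closed form, split it into a residual term that vanishes and a partial-sum term that converges to the desired series, and then take the limit $K\to\infty$.

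First, I would establish by induction on $K$ the closed form
\[
\mathbi{Z}^{(K)} = (1-\alpha)^{K}\bigl(\mathcal{A}^{(1)}\bigr)^{K}\mathbi{X'} + \alpha \sum_{k=0}^{K-1}(1-\alpha)^{k}\bigl(\mathcal{A}^{(1)}\bigr)^{k}\mathbi{X'}.
\]
The base case $K=0$ is $\mathbi{Z}^{(0)}=\mathbi{X'}$; the step follows by substituting the formula for $\mathbi{Z}^{(K)}$ into the recurrence $\mathbi{Z}^{(K+1)}=(1-\alpha)\mathcal{A}^{(1)}\mathbi{Z}^{(K)}+\alpha\mathbi{X'}$, shifting the summation index, and recombining the $\alpha\mathbi{X'}$ with the $k=0$ term.

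Next I would argue that the residual $R_K:=(1-\alpha)^{K}(\mathcal{A}^{(1)})^{K}\mathbi{X'}$ tends to zero. By construction in \eqref{equ:attn_coef_norm}, each row of $\mathcal{A}^{(1)}$ is the masked softmax over the neighborhood $\mathcal{N}_i$, so rows are non-negative and sum to $1$ (using $\deg(v)\ge 1$ from Definition~\ref{def:labelled_undirected_graph}). Hence $\mathcal{A}^{(1)}$ is row-stochastic, which gives $\|\mathcal{A}^{(1)}\|_{\infty}\le 1$ and thus $\|(\mathcal{A}^{(1)})^K\|_{\infty}\le 1$ for every $K$. Using submultiplicativity of the induced norm, $\|R_K\|_{\infty}\le (1-\alpha)^K\|\mathbi{X'}\|_{\infty}\to 0$ since $\alpha\in(0,1)$.

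For the remaining term, I would recognize that with $\theta_k=\alpha(1-\alpha)^k$ the partial sum is exactly
\[
\alpha \sum_{k=0}^{K-1}(1-\alpha)^{k}\bigl(\mathcal{A}^{(1)}\bigr)^{k}\mathbi{X'} = \sum_{k=0}^{K-1}\theta_k\bigl(\mathcal{A}^{(1)}\bigr)^{k}\mathbi{X'},
\]
which is the $K$-th partial sum of the series defining $\mathcal{A}\mathbi{X'}$ in \eqref{equ:attn_diff_mth}. Absolute convergence follows from $\sum_{k\ge 0}\theta_k\|(\mathcal{A}^{(1)})^k\|_{\infty}\le \sum_{k\ge 0}\alpha(1-\alpha)^k=1<\infty$, so the partial sums converge to $\mathcal{A}\mathbi{X'}$. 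Combining with $R_K\to 0$ yields the claim.

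The only mildly delicate step is the bound $\|\mathcal{A}^{(1)}\|_{\infty}\le 1$, which hinges on the masked softmax being a genuine probability distribution on each row; everything else is the standard Neumann-series/geometric argument, so I do not anticipate a real obstacle.
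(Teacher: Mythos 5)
Your proof is correct, but it is worth noting that the paper never proves this proposition itself: it defers to the cited GMA work, and the closest in-paper material is the unrolling of $\mathbi{Z}^{(K)}$ in Appendix~\ref{sec:approx_err} and the proof of the generalized Proposition~2 in Appendix~\ref{sec:separated_teleport}, both of which use exactly your decomposition into a vanishing residual plus a geometric partial sum. The genuine difference lies in how the residual is handled. The paper's Lemma~2 writes $\lim_{K\to\infty}(\vec{1}-\beta)^{K}(\mathcal{A}^{(1)})^{K}\mathbi{X'}$ as a product of two limits, which tacitly presumes that $\lim_{K\to\infty}(\mathcal{A}^{(1)})^{K}\mathbi{X'}$ exists; you instead observe that the masked softmax makes each row of $\mathcal{A}^{(1)}$ a (sub)probability vector over a nonempty neighbourhood (using $\deg(v)\ge 1$), so $\|(\mathcal{A}^{(1)})^{K}\|_{\infty}\le 1$ and $\|R_K\|_{\infty}\le(1-\alpha)^{K}\|\mathbi{X'}\|_{\infty}\to 0$. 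Your norm-based route is the more robust one: it needs only boundedness of the powers of $\mathcal{A}^{(1)}$ (row sums $\le 1$ already suffice, so nothing breaks if masking drops an entry), and the same bound gives absolute convergence of the series defining $\mathcal{A}\mathbi{X'}$ for free. What the paper's style buys is brevity and a statement that transfers verbatim to the node-specific decay case with $\beta$ in place of $\alpha$, which is the generalization it actually needs; your argument also extends there, since replacing $(1-\alpha)^{K}$ by the entrywise factors $(1-\beta_v)^{K}$ preserves the decay.
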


This proposition was proved in \citep{ijcai2021p425}, demonstrating that we can reduce the complexity of calculating $\mathcal{A}$ to $O(K|E|)$ message passing operators. Nonetheless, $K$ can be eliminated by considering $K$ as a constant and $K \ll + +\infty$. Doing so raises a concern about how well the $\mathcal{A}X^{'}$ is approximated by $Z^{(K)}$. In other words, the concern is selecting the $K$ that balances the trade-off between approximation error and computing complexity. The former works \citep{ijcai2021p425} suggested choosing $3 \le K \le 10$ by empirical experiments. In this study, we provide the theoretical justification for this selection strategy in Appendix \ref{sec:approx_err}. Consequently, assuming that $\mathcal{A}$ is approximated by $\mathcal{A}^{(K)}$ (Equation \ref{equ:attn_diff}), the complexity of computing $\mathcal{A}$ is reduced to just $O(|E|)$.

\begin{equation}
    \label{equ:attn_diff}
    \begin{cases}
    (\mathcal{A}^{(1)})^{0} &= \mathbi{I}\\
    \mathcal{A} &\approx \mathcal{A}^{(K)} = \mathbi{Z}^{(K)} \mathbi{X'}^{-1}
    \end{cases}
\end{equation}

\paragraph{\textbf{Learning teleport probabilities}} 
In graph attention diffusion theory, a critical concept is the definition of Personalized PageRank (PPR) \citep{lofgren2015efficient}, which reveals the importance of each node. The original work of GAT has proved that the attention matrix in GAT can be viewed as the transition matrix in PPR \citep{wang2020multi}. However, the original work used the same teleport probability for all nodes, which theoretically limits the power of PPR. Therefore, leveraging the capabilities of PPR, we propose a novel approach that involves the customization of teleport probabilities for individual nodes, denoted as $\beta = \{ \beta_v \}_{v=1}^{|V|}$. The primary challenge at hand revolves around selecting appropriate $\beta_v$ values for each node. To address this challenge, inspired by Gated Recurrent Units \citep{cho-etal-2014-learning}, we devise a method wherein the network autonomously learns the teleport probabilities via a straightforward linear transformation with $\mathbi{W}_{\beta} \in \mathbb{R}^{2F{'} \times 1}$. The equation presented in (\ref{equ:alpha_approx}) delineates how $\beta$ is derived.
\begin{equation}
    \label{equ:alpha_approx}
    \beta = \sigma((\mathbi{X{'}} || \mathcal{A}^{(1)}\mathbi{X{'}}) \mathbi{W}_{\beta} + b),
\end{equation}
where $||$ denotes the concatenation operator, and $b$ represents the bias term. It is worth noting that employing distinct teleport probabilities for each node results in modifications to Equation \ref{equ:attn_diff_mth}. These alterations encompass $\theta_{kj} = \beta_j(1-\beta_j)^k$, $\sum_{k=0}^{\infty} \theta_{kj} = 1$ for $j \in \overline{1,N}$, and $\theta_{kj} > 0$. Furthermore, they involve a row-wise multiplication between $\theta_k$ and $(\mathcal{A}^{(1)})^k$. It is important to emphasize that these changes do not contravene \textbf{Proposition 1}, as established in Appendix \ref{sec:separated_teleport}. Additionally, we provide a comprehensive procedure for computing learnable multi-hop attention in Algorithm \ref{alg:multihop}.

\begin{algorithm}[!ht]
\caption{Learnable Multi-hop Attention}
\label{alg:multihop}
\SetKwInOut{Input}{Input}
\SetKwInOut{Output}{Output}
    
\Input{1-hop attention matrix $\mathcal{A}^{(1)}$\\
       Node feature matrix $\mathbi{X'}$\\
       Number of approximate hops $K$
      }
\Output{Diffused node feature matrix $\mathbi{\widehat{X}}$}
$\mathbi{Z}^{(0)} \gets \mathbi{X'}$\\
$\beta \gets \sigma((\mathbi{X{'}} || \mathcal{A}^{(1)}\mathbi{X{'}}) \mathbi{W}_{\beta} + b)$\\
\For{k \text{\textbf{in}} \text{Range($1 \dots K$)}}
{
    $\mathbi{Z}^{(k)} = (1 - \beta) \mathcal{A}^{(1)} \mathbi{Z}^{(k-1)} + \beta \mathbi{Z}^{(0)}$
}
$\mathbi{\widehat{X}} \gets \mathbi{Z}^{(K)}$
\end{algorithm}

\subsection{Graph Learnable Multi-hop Attention Networks}
\label{sec:multi_layer}
When processing input data that comprises a pattern and a target, denoted as a triple $(\mathbi{X}, \mathbi{A}^{in}, \mathbi{A}^{cr})$, we employ our proposed GLeMa layer ($\text{GLeMa}(\cdot)$) to extract hidden features. Specifically, the input is bifurcated into two tuples: $(\mathbi{X}, \mathbi{A}^{in})$ and $(\mathbi{X}, \mathbi{A}^{cr}$), which are subsequently subjected to $L_G$ iterations of GLeMa layers. The resulting representation for $(\mathbi{X}, \mathbi{A}^{in})$ captures intra-graph features, whereas the representation for $(\mathbi{X}, \mathbi{A}^{cr}$) captures inter-graph features.

The node features at the $l^{th}$ layer are computed by taking the difference between the inter-graph features and the intra-graph features from the previous $(l-1)^{th}$ layer. This learning of disparities between inter-graph and intra-graph features enhances the signal for verifying subgraph isomorphism. The formal definition of the Graph Learnable Multi-hop Attention network architecture is presented in Equation \ref{equ:GLeMa_networks}.
\begin{equation}
\label{equ:GLeMa_networks}
\begin{cases}
    \mathbi{X}^{0} = \mathbi{X}\\
    \mathbi{\widehat{X}}_{in}^{l} = \text{GLeMa}_l(\mathbi{X}^{l-1}, \mathbi{A}^{in}), l=\overline{1, L_G}\\
    \mathbi{\widehat{X}}_{cr}^{l} = \text{GLeMa}_l(\mathbi{X}^{l-1}, \mathbi{A}^{cr}), l=\overline{1, L_G}\\
    \mathbi{X}^{l} = \mathbi{\widehat{X}}_{cr}^{l} - \mathbi{\widehat{X}}_{in}^{l}, l=\overline{1, L_G}
\end{cases}
\end{equation}

\section{Multi-task optimization}
\label{sec:multi_task}

In this section, we introduce the optimization mechanisms for both the subgraph matching task and the matching explanation task, discussed in Section~\ref{sec:task1} and Section~\ref{sec:task2}, respectively. Subsequently, we delve into the optimization approach for multi-task learning, as outlined in Section~\ref{sec:2tasks}.

\subsection{Subgraph matching task}
\label{sec:task1}
In the context of subgraph matching, after their extraction via $L_{G}$ GLeMa layers, the node feature vectors derived from patterns are aggregated to generate a representation vector. This representation vector is pivotal in determining the isomorphism between the input pattern and the target graph, achieved through a classifier comprising $L_{FC}$ fully connected layers. The methodology for computing the representation vector is elucidated in Equation \ref{equ:aggregation}, while Equation \ref{equ:fc_layers} provides the mathematical formulations underpinning the classifier.

\begin{equation}
    \label{equ:aggregation}
    x^{0}_{repr} = \frac{1}{|V_\mathcal{P}|}\sum_{i \in V_{\mathcal{P}}} x_i^{L_{G}}
\end{equation}

\begin{equation}
    \label{equ:fc_layers}
    \begin{cases}
    x^{i}_{repr} &= \delta (\mathbi{W}_i x^{i-1}_{repr} + b_i), i = \overline{1,L_{FC}-1}\\
    \widehat{y} &= \sigma (\mathbi{W}_y x^{L_{FC}-1}_{repr} + b_y)
    \end{cases}
\end{equation}
In Equation \ref{equ:aggregation}, we denote $x^{0}_{repr}$ as the representation vector of the input, and $x_i^{L{G}}$ as the embedding vector of the $i^{th}$ node after undergoing $L_{G}$ GLeMa layers. In \eqref{equ:fc_layers}, we represent $x^{i}_{\text{repr}}$ as the output vector of the $i$-th fully-connected layer, with $\mathbi{W}_i$ and $b_i$ signifying the respective learnable weight matrix and bias parameters.

\subsection{Matching explanation task}
\label{sec:task2}

Leveraging the efficacy of the multi-hop attention mechanism, our proposed model can predict the mapping of a pattern within a target graph. The enigmatic aspect of our approach entails filtering pairs of \textit{(pattern node, target node)} based on the 1-hop attention coefficients obtained from the final inter-graph GLeMa layer, subject to a predetermined threshold. Assuming that a threshold-compliant pair signifies a valid mapping between a pattern node and a target node, our model can enumerate all potential mappings, irrespective of whether the input pattern is isomorphic to the target or not. The precise computational details of this matching task are expounded upon in Equation \ref{equ:mapping}. In \eqref{equ:mapping}, $\mathcal{M}$ is the set of mapping nodes between the pattern and target graph; $p_{ij}$ which is computed by average of 1-hop attention coefficients $((a^{(1)}_{ij})^{L_{G}}, (a^{(1)}_{ji})^{L_{G}})$ is the mapping probability between pattern $i^{th}$ node and target $j^{th}$ node.

\begin{equation}
    \label{equ:mapping}
    \begin{split}
    \mathcal{M} =& \{ (i, j, p_{ij}) | p_{ij} \ge \epsilon \}, \text{where } i \in V_{\mathcal{P}}, j \in V_{\mathcal{T}} \text{ and}\\
    & p_{ij} = \frac{1}{2} \left((a^{(1)}_{ij})^{L_{G}} + (a^{(1)}_{ji})^{L_{G}}\right).
    \end{split}
\end{equation}

\subsection{Multi-task learning optimization}
\label{sec:2tasks}

In order to optimize our proposed models for the dual tasks of subgraph matching and matching explanation, we introduce a composite loss function consisting of two fundamental components. The first component, denoted as $\mathcal{L}{sm}$, is a binary cross-entropy loss designed to accurately assess the model's capacity to predict subgraph isomorphism. The second component, $\mathcal{L}{me}$, is an attention-based loss aimed at reinforcing the attention coefficients between nodes $i$ and $j$ ($i \in V_{\mathcal{P}}, j \in V_{\mathcal{T}}$) that correspond to actual mappings, while simultaneously diminishing the coefficients for node pairs sharing the same label (represented as $m \in V_{\mathcal{P}}, n \in V_{\mathcal{T}}, l(m)=l(n)$) but lacking a mapping relationship.

Our overarching objective function, as expressed in Equation \ref{equ:loss_fn}, comprises a combination of $\mathcal{L}_{sm}$ and $\lambda \mathcal{L}_{me}$, where the weight $\lambda$ serves as a hyperparameter for regulating the relative importance of the two loss components.

\begin{equation}
    \label{equ:loss_fn}
    \begin{cases}
    \mathcal{L}_{sm} &= -\frac{1}{|\mathcal{D}|} \sum_{k=1}^{|\mathcal{D}|} y_k \cdot \text{log}(\widehat{y}_k) + (1-y_k) \cdot \text{log}(1-\widehat{y}_k)\\
    \mathcal{L}_{me} &= \frac{1}{|\mathcal{D}|} \sum_{k=1}^{|\mathcal{D}|} \frac{\sum \text{exp}\left(-\left(a_{ij}^{(1)(L_{G})}\right)_k\right)}{\sum \text{exp}\left(-\left(a_{mn}^{(1)(L_{G})}\right)_k\right) - \sum \text{exp}\left(-\left(a_{ij}^{(1)(L_{G})}\right)_k\right) + 1}\\
    \mathcal{L} &= \mathcal{L}_{sm} + \lambda \mathcal{L}_{me}
    \end{cases}
\end{equation}
\section{Experiments}
\label{sec:experiment}
In this section, we provide comprehensive evaluation protocols designed for \toolname. Subsequently, we conduct rigorous experiments using six real-world datasets to assess the performance of \toolname. The primary objective is to address the following research questions (RQs) through systematic experimentation:

\begin{itemize}
\item \textbf{RQ1}: Does \toolname exhibit superior performance compared to diverse baseline techniques in the subgraph matching task?
\item \textbf{RQ2}: How confident are the predictions made by \toolname?
\item \textbf{RQ3}: What impact do variations in pattern size and density have on the performance of \toolname?
\item \textbf{RQ4}: How effectively does \toolname perform in the matching explanation task?
\item \textbf{RQ5}: What is the individual contribution and impact of each component of \toolname?
\item \textbf{RQ6}: Can \toolname effectively handle inductive settings, adapting to new, previously unseen graphs?
\item \textbf{RQ7}: Does \toolname retain its effectiveness when dealing with directed subgraph matching and explanation scenarios?
\end{itemize}

Each research question is carefully designed to delve into specific aspects of \toolname's performance, scalability, interpretability, and adaptability across varying conditions and settings, providing a nuanced understanding of its capabilities and limitations. The experiments conducted aim to provide robust empirical evidence in addressing the above research inquiries, proving the effectiveness of our framework in reality. 


\subsection{Experimental setup}
\label{sec:exp:setup}
In this subsection, a detailed exposition of our experimental framework is presented, encompassing the selection of datasets, baseline methods, data pre-processing methodologies, and the evaluation metrics employed. This comprehensive account aims to elucidate the specifics of our experimental setup, ensuring transparency and replicability while providing clarity on the foundational elements shaping our assessment methodology.

\subsubsection{Datasets}
We assess the performance of our framework across a diverse range of real datasets encompassing various domains, including citation, collaboration, social, and communication networks. To evaluate the effectiveness of \toolname, we conduct experiments on six well-established real-world datasets frequently employed in various applications~\citep{Morris2020} within graph mining research. The statistics for these datasets are summarized in Table \ref{tab:dataset}.

\begin{table}[!ht]
    \centering
    \caption{Real datasets}
    \label{tab:dataset}
    \begin{tabular}{ccccccc}
        \toprule
         \textbf{Domain} & \textbf{$\mathcal{D}$} & \textbf{$\overline{|V(G)|}$} & \textbf{$\overline{|E(G)|}$} & \textbf{$\overline{D}$} & \textbf{$|\sum|$} & $T_V$ \\
         \midrule
         Bioinformatics & KKI & 26.96 & 48.42 & 3.19 & 83 & 190\\
         Chemistry & COX2 & 41.22 & 43.45 & 2.10 & 467 & 20\\
         Chemistry & COX2\_MD & 26.28 & 335.12 & 25.27 & 303 & 36\\
         Chemistry & DHFR & 42.43 & 44.54 & 2.10 & 756 & 71\\
         Social networks & DBLP-v1 & 10.48 & 19.65 & 3.43 & 19456 & 39\\
         Computer vision & MSRC-21 & 77.52 & 198.32 & 5.10 & 563 & 141\\
         \bottomrule
    \end{tabular}
\end{table}



\subsubsection{Baseline techniques}
We provide details of our baseline methods, including state-of-the-art exact and approximate approaches as follows.

\sstitle{Exact approach} We utilize seven distinct approaches as below.
\begin{itemize}
\item \emph{VF3}\citep{carletti2017introducing} is an extension of the VF2 algorithm\citep{cordella2004sub} designed to handle larger graphs. It employs an enhanced bit vector representation for graph states, introduces a novel matching order for query nodes, and incorporates various heuristics like degree-based filtering and dynamic node reordering to enhance matching efficiency.
\item \emph{TurboISO}~\citep{han2013turboiso} is an efficient subgraph isomorphism algorithm that utilizes pre-processing, constructing an index for rapid candidate subgraph filtering. It employs a divide-and-conquer strategy along with heuristics such as degree-based filtering and forward checking to narrow down the search space.
\item \emph{CFL}~\citep{bi2016efficient} minimizes redundant Cartesian products in the search space by strategically postponing them based on query structure. It introduces a compact path-based auxiliary data structure for efficient matching.
\item \emph{CECI}~\citep{bhattarai2019ceci} partitions the target graph into multiple clusters for parallel processing and uses BFS-based filtering, reverse-BFS-based refinement, and set intersection techniques to optimize the search process.
\item \emph{QuickSI}~\citep{shang2008taming} employs QI-Sequence to constrain the search space, determining order based on feature frequencies. It introduces Swift-Index, reducing costs in the filtering phase.
\item \emph{DAF}~\citep{han2019efficient} introduces concepts like dynamic programming between a directed acyclic graph (DAG) and a graph, adaptive matching order with DAG ordering, and pruning by failing sets to address limitations of existing algorithms.
\item \emph{GraphQL}~\citep{he2008graphs} introduces a specialized query language for graph databases, extending formal languages from strings to graphs. It uses neighbourhood subgraphs and profiles to optimize the search order and reduce the search space.
\end{itemize}

\sstitle{Approximate approach} Our comparison includes \emph{NeuralMatch}~\citep{lou2020neural}, a cutting-edge subgraph matching algorithm employing a specialized graph neural network architecture. This approach efficiently identifies the neighbourhood within a large target graph that encompasses a smaller query graph as a subgraph. Through a GNN, it acquires robust graph embeddings in an ordered space, encapsulating structural properties like transitivity, antisymmetry, and non-trivial intersections. As a result, NeuralMatch achieves real-time approximate subgraph matching at an unprecedented scale.

\subsubsection{Data preparation} 

In our experimental setup, distinct training and testing datasets are utilized. The testing dataset comprises real-world instances, while the training dataset is artificially synthesized to mirror the size and degree distribution of the respective testing dataset.

For each graph in the testing dataset $\mathcal{D}$, we generate 2000 queries, half of which are isomorphic to the graphs and vice versa. The sizes of the query graphs vary from 2 to the size of the data graph, adhering to a \textit{Uniform} distribution. The average degrees of the queries follow a Normal distribution $\mathcal{N}(\overline{D}, \sigma^2_{\mathcal{D}}(D))$.

Corresponding to each real dataset, we create a synthetic training dataset that replicates the graph size and degree distribution. We adopt an identical process to generate 2000 queries for each target graph, akin to the procedure employed in the testing set. The number of target graphs in these synthetic training datasets is four times greater than in the real datasets. 





\subsubsection{Metrics} 

\paragraph{\textbf{Subgraph matching task:}} In this task, we conduct a comparative analysis of our proposed approach by evaluating it against exact methods and approximate methods using various metrics to assess its runtime and performance comprehensively. The metrics utilized in this evaluation are as follows:

\begin{itemize}
\item \emph{Execution time}: emph{Execution time}: This metric denotes the average processing time for a query (target graph, query pattern), excluding disk loading time.

\item \emph{ROC AUC}: The ROC curve illustrates model performance by plotting the True Positive Rate (TPR) against the False Positive Rate (FPR) at various thresholds. The AUC quantifies overall performance based on the curve's area, ranging from 0 to 1. Higher values denote better performance.

\item \emph{PR AUC}: This metric uses Precision-Recall curves, emphasizing Precision and Recall rates over TPR and FPR. It quantifies the area under the Precision-Recall curve, providing a more accurate assessment of imbalanced datasets. Higher PR AUC values indicate better performance.

\item \emph{F1 score}: The F1 score, a harmonic mean of Precision and Recall, offers a balanced evaluation of the model's performance. Precision measures accuracy in identifying positive instances, while Recall assesses the ability to capture all actual positives.
    
\end{itemize}

\paragraph{\textbf{Matching explanation task:}} To demonstrate the efficacy of identifying mappings of isomorphic subgraphs, we extract the attention matrix from the last GLeMA layer in the branch that incorporates cross-graph connections. Subsequently, we rank the mappings of each query node within the transaction according to their respective attention scores. We then employ the following two metrics for evaluation:
    \begin{itemize}
        \item \emph{Average Top-$K$ Accuracy}: Assuming that $acc^K_i$ represents the Top-$K$ accuracy of node $i$ in the query, we compute the average Top-$K$ accuracy across all testing samples using the following equation.
        \[
        TopK = \frac{1}{|\mathcal{D}_{test}|} \sum_{ (\mathcal{T}, \mathcal{P}) \in \mathcal{D}_{test}} \left( \frac{1}{|V_{\mathcal{P}}|} \sum_{i \in V_{\mathcal{P}}} Acc^K_i \right)
        \]

        \item \emph{Mean Reciprocal Rank}: This metric assesses the model capability to predict the correct mapping with a high probability. It is computed by taking the multiplicative inverse of the rank of the first correct mapping. Let $rank_i$ represent the ranking of the correct mapping for query node $i$. The average reciprocal ranking across test samples can be calculated as follows.
        \[
        MRR = \frac{1}{|\mathcal{D}_{test}|} \sum_{ (\mathcal{T}, \mathcal{P}) \in \mathcal{D}_{test}} \left( \frac{1}{|V_{\mathcal{P}}|} \sum_{i \in V_{\mathcal{P}}} \frac{1}{rank_i} \right)
        \]
    \end{itemize}


\subsection{Subgraph matching}

To address the first research question (\textbf{RQ1}), we conduct a comprehensive evaluation of \toolname's performance in an end-to-end manner, with a specific focus on several critical aspects: (i) execution time and (ii) performance in subgraph matching tasks, assessed through four distinct metrics which are ROC AUC, PR AUC, F1 score, and accuracy.

\sstitle{Execution Time across real datasets}
We present the findings in~\autoref{fig:e2e_time}. The results unequivocally demonstrate that our proposed solution, \toolname, boasts the shortest execution time when compared to all state-of-the-art exact and approximate methods. This underscores \toolname's superior scalability, enabling it to process larger graphs effectively.

\begin{figure}[!ht]
    \centering
    \includegraphics[width=\textwidth]{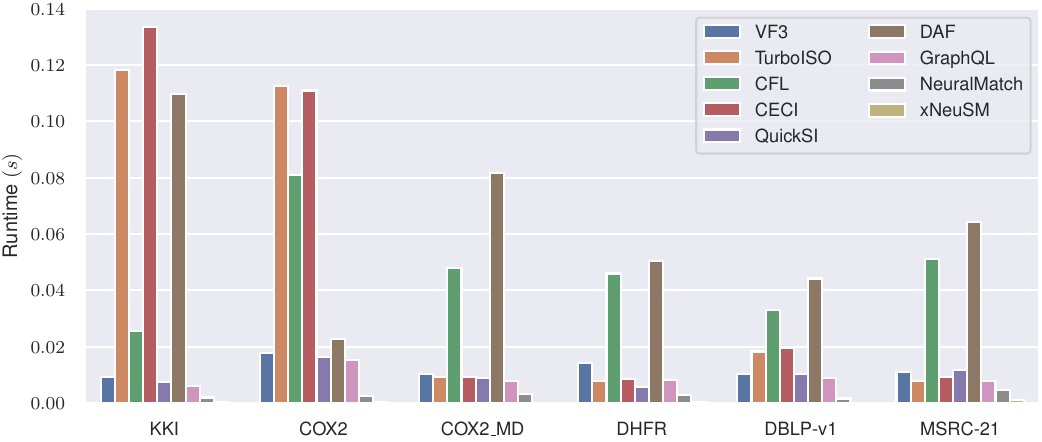}
    \caption{Execution time on subgraph matching task.}
    \label{fig:e2e_time}
\end{figure}

\sstitle{Performance across real datasets}
We showcase the benchmarking results of our \toolname and approximate approach in~\autoref{fig:e2e_performance}. The outcomes unequivocally affirm that our approach achieves performance levels nearly on par with exact methods while surpassing the current SOTA performance of the approximate method across all tested real datasets. This underscores the versatility of our approach, making it applicable in diverse real-world scenarios such as pattern matching in social networks, identifying compounds with specific activities, and beyond.

\begin{figure}[!ht]
    \centering
    \includegraphics[width=\textwidth]{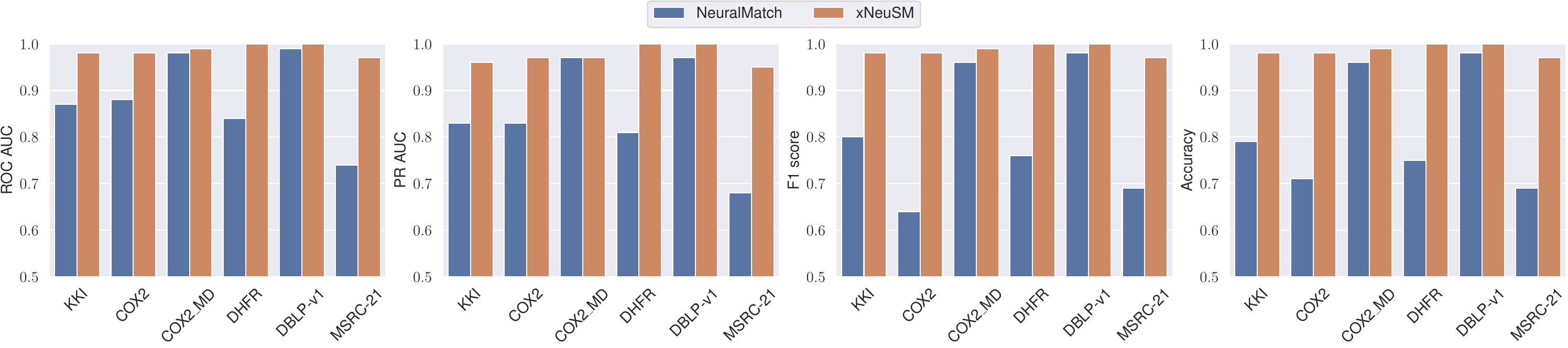}
    \caption{Performance comparison between Neural Match and \toolname}
    \label{fig:e2e_performance}
\end{figure}








\subsection{Confidence Analysis}
In this section, we assess the model confidence in its predicted outputs to answer the \textbf{RQ2}. We demonstrate that our model maintains high-performance levels by elevating the output probability threshold for the subgraph matching task. Figure \ref{fig:confidence} depicts the relationship between confidence threshold and model performance. With a confidence threshold of $0.9$, our model attains over $90\%$ across all metrics in the testing datasets. This suggests that our \toolname exhibits robust confidence in its predictions.

\begin{figure}[!ht]
    \centering
    \includegraphics[width=\textwidth]{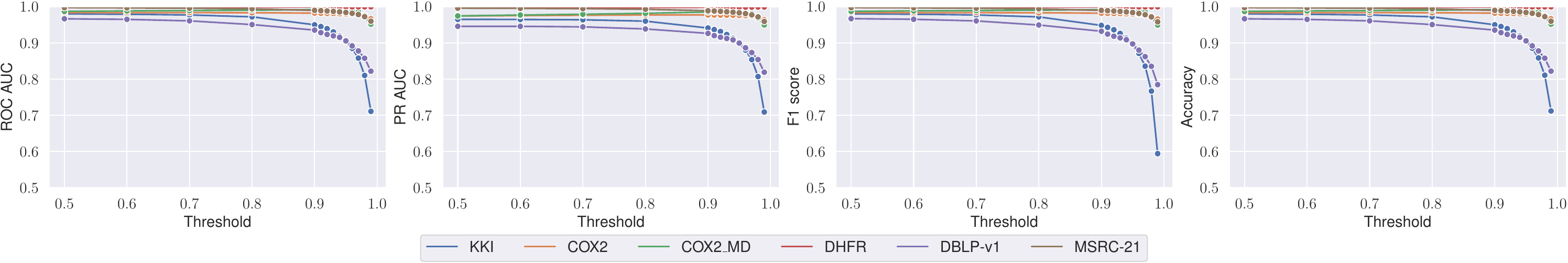}
    \caption{Relation between confidence threshold and model performance}
    \label{fig:confidence}
\end{figure}
\subsection{Scalability}

To assess the scalability of \toolname\ and address research question \textbf{RQ3}, we evaluated the performance of all techniques using various real datasets with different levels of graph density, ranging from sparse to dense graphs.

\begin{itemize}
\item Vary $D(P)$: We divided queries into two subsets based on their average degree. The first subset, labelled ``dense", included queries with a degree of three or higher. The second subset, labelled ``sparse", encompassed queries with a degree less than 3.
\item Vary $|V_P|$: We partitioned the query set into four groups based on query size thresholds: $|V_P| \leq 20$, $20 < |V_P| \leq 40$, $40 < |V_P| \leq 60$, and $60 < |V_P|$.
\end{itemize}

    
    

We present our results in Figure \ref{fig:scale_time_by_dataset}, with runtime represented on a logarithmic scale. These results show that exact methods experience a significant increase in runtime when the number of query nodes is augmented. Specific methods, such as CECI or CFL, exhibit high sensitivity to the number of query nodes. In contrast, despite experiencing increased time requirements, our methods demonstrate relatively small increments due to the parallelizability of all operations using GPU. Consequently, our method proves to be more efficient in large-scale settings.

\begin{figure}[!ht]
    \centering
    \includegraphics[width=\textwidth]{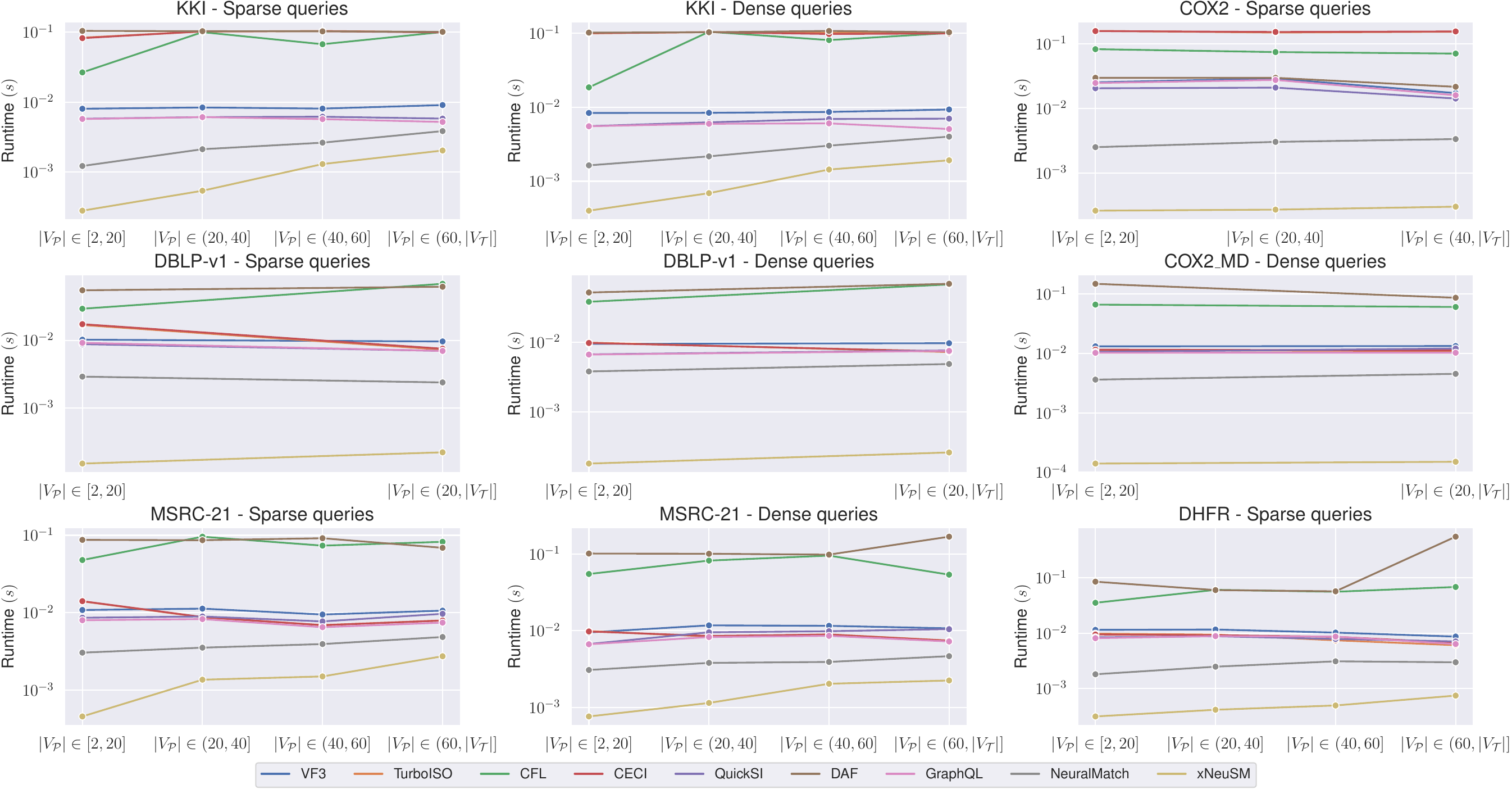}
    \caption{Runtime scalability analysis on six datasets including: KKI, COX2, COX2\_MD, DHFR, DBLP-v1 and MSRC-21}
    \label{fig:scale_time_by_dataset}
\end{figure}





\subsection{Matching explanation}
\sstitle{Quantitative analysis}
We conducted experiments on the matching explanation task to address \textbf{RQ4}. It is important to note that this task exclusively applies to known isomorphism pairs of (pattern, target). Non-isomorphic cases were deliberately excluded from our testing as they may not represent real-world use cases. Subgraph mapping is necessary when the isomorphism is established. Within this task, we computed the attention scores for all transaction nodes concerning each query node within the inter-attention branch of the final GLeMA layer. Subsequently, we organized the list of corresponding transaction nodes for each query node based on the computed attention scores. The effectiveness of subgraph alignment is evaluated and presented in Table \ref{tab:explain_alinging}.

\autoref{tab:explain_alinging} illustrates the superior performance of our method in the subgraph alignment task across diverse datasets, such as KKI, DHFR, DBLP-v1, and MSRC-21. However, the task presents increased challenges when dealing with a limited set of node labels, which can complicate the retrieval of accurate subgraph mappings. This complexity arises from the exponential growth in the potential number of mappings.

\begin{table}[!ht]
    \caption{Performance of  in subgraph aligning task}
    \label{tab:explain_alinging}
    \centering
    \begin{tabular}{ccccc}
        \toprule
        \textbf{Dataset} & \textbf{Top-1}$\uparrow$ & \textbf{Top-5}$\uparrow$ & \textbf{Top-10}$\uparrow$ & \textbf{MRR}$\uparrow$\\
        \midrule
        KKI & 0.9978 & 0.9999 & 0.9999 & 0.9987\\
        COX2 & 0.2513 & 0.6259 & 0.8395 & 0.4273\\
        COX2\_MD & 0.9481 & 0.9828 & 0.9881 & 0.9630\\
        DHFR & 0.9999 & 0.9999 & 0.9999 & 0.9999\\
        DBLP-v1 & 0.9994 & 0.9999 & 0.9999& 0.9996\\
        MSRC-21 & 0.9994 & 0.9999 & 0.9999& 0.9999\\
        \bottomrule
    \end{tabular}
\end{table}

\sstitle{Qualitative analysis} For the qualitative analysis of the matching explanation task, we have generated visualizations for two exemplary results derived from the KKI dataset. Specifically, \autoref{fig:quanlitative1} illustrates an isomorphic case, while \autoref{fig:quanlitative2} portrays a non-isomorphic case. A numerical label denotes each node within these figures, and nodes predicted to be aligned are color-coded uniformly. Our model demonstrates exceptional performance in the isomorphic case, accurately predicting all node mappings. Conversely, the model generates candidate mappings for all potential subgraphs within the pattern in the non-isomorphic case. For instance, in \autoref{fig:quanlitative2}, the subgraph $154-54-129-152$ can be transformed into an isomorphic subgraph of the target within the pattern by removing the edge $154-152$.

\begin{figure}[!ht]
    \centering
    \begin{subfigure}[b]{0.48\textwidth}
    \centering
    \begin{tikzpicture}[scale=1.7, font=\small, inner sep=1mm] 
      \draw
        (1.0, 0.0) node[circle,draw,fill={rgb:blue,115;green,158},text=white] (0){54}
        (0.866, 0.5) node[circle,draw] (1){87}
        (0.5, 0.866) node[circle,draw,fill={rgb:blue,178;green,114},text=white] (2){154}
        (-0.0, 1.0) node[circle,draw] (3){129}
        (-0.5, 0.866) node[circle,draw,fill={rgb:red,213;green,94},text=white] (4){155}
        (-0.866, 0.5) node[circle,draw] (5){156}
        (-1.0, -0.0) node[circle,draw,fill={rgb:red,47;green,37;blue,94},text=white] (6){19}
        (-0.866, -0.5) node[circle,draw] (7){72}
        (-0.5, -0.866) node[circle,draw] (8){152}
        (0.0, -1.0) node[circle,draw,fill={rgb:red,141;green,45;blue,57},text=white] (9){174}
        (0.5, -0.866) node[circle,draw] (10){74}
        (0.866, -0.5) node[circle,draw] (11){167};
      \begin{scope}[-]
        \draw (0) to (1);
        \draw (0) to (2);
        \draw (0) to (3);
        \draw (1) to (4);
        \draw (1) to (5);
        \draw (2) to (4);
        \draw (2) to (6);
        \draw (3) to (7);
        \draw (3) to (8);
        \draw (4) to (5);
        \draw (4) to (9);
        \draw (5) to (10);
        \draw (8) to (11);
      \end{scope}
    \end{tikzpicture}
    \begin{tikzpicture}[scale=0.8, font=\small, inner sep=1mm]
      \draw
        (1.0, 0.0) node[circle,draw,fill={rgb:blue,178;green,114},text=white] (0){154}
        (0.309, 0.951) node[circle,draw,fill={rgb:red,213;green,94},text=white] (1){155}
        (-0.809, 0.588) node[circle,draw,fill={rgb:blue,115;green,158},text=white] (2){54}
        (-0.809, -0.588) node[circle,draw,fill={rgb:red,141;green,45;blue,57},text=white] (3){174}
        (0.309, -0.951) node[circle,draw,fill={rgb:red,47;green,37;blue,94},text=white] (4){19};
      \begin{scope}[-]
        \draw (0) to (2);
        \draw (0) to (1);
        \draw (0) to (4);
        \draw (1) to (3);
      \end{scope}
    \end{tikzpicture}
    \caption{Subgraph isomorphism case. From up to down: The target graph and the isomorphic pattern graph}
    \label{fig:quanlitative1}
     \end{subfigure}
     \hfill
     \begin{subfigure}[b]{0.48\textwidth}
     \centering
    \begin{tikzpicture}[scale=1.7, font=\small, inner sep=1mm] 
      \draw
        (1.0, 0.0) node[circle,draw,fill={rgb:blue,115;green,158},text=white] (0){54}
        (0.866, 0.5) node[circle,draw] (1){87}
        (0.5, 0.866) node[circle,draw,fill={rgb:blue,178;green,114},text=white] (2){154}
        (-0.0, 1.0) node[circle,draw,fill={rgb:red,204;blue,167;green,121},text=white] (3){129}
        (-0.5, 0.866) node[circle,draw] (4){155}
        (-0.866, 0.5) node[circle,draw] (5){156}
        (-1.0, -0.0) node[circle,draw] (6){19}
        (-0.866, -0.5) node[circle,draw] (7){72}
        (-0.5, -0.866) node[circle,draw,fill={rgb:red,240;blue,66;green,228},text=white] (8){152}
        (0.0, -1.0) node[circle,draw,fill={rgb:red,141;green,45;blue,57},text=white] (9){174}
        (0.5, -0.866) node[circle,draw] (10){74}
        (0.866, -0.5) node[circle,draw] (11){167};
      \begin{scope}[-]
        \draw (0) to (1);
        \draw (0) to (2);
        \draw (0) to (3);
        \draw (1) to (4);
        \draw (1) to (5);
        \draw (2) to (4);
        \draw (2) to (6);
        \draw (3) to (7);
        \draw (3) to (8);
        \draw (4) to (5);
        \draw (4) to (9);
        \draw (5) to (10);
        \draw (8) to (11);
      \end{scope}
    \end{tikzpicture}
    \begin{tikzpicture}[scale=1, font=\small, inner sep=1mm]
      \draw
        (1.0, 0.0) node[circle,draw,fill={rgb:red,204;blue,167;green,121},text=white] (0){129} 
        (0.623, 0.782) node[circle,draw,fill={rgb:blue,115;green,158},text=white] (1){54} 
        (-0.223, 0.975) node[circle,draw,fill={rgb:blue,178;green,114},text=white] (2){154} 
        (-0.901, 0.434) node[circle,draw] (3){80}
        (-0.901, -0.434) node[circle,draw,fill={rgb:red,141;green,45;blue,57},text=white] (4){174} 
        (-0.223, -0.975) node[circle,draw] (5){71}
        (0.623, -0.782) node[circle,draw,fill={rgb:red,240;blue,66;green,228},text=white] (6){152}; 
      \begin{scope}[-]
        \draw (0) to (1);
        \draw (0) to (6);
        \draw (1) to (5);
        \draw (1) to (2);
        \draw (2) to (3);
        \draw (2) to (6);
        \draw (3) to (4);
      \end{scope}
    \end{tikzpicture}
    \caption{Subgraph non-isomorphism case. From up to down: The target graph and the non-isomorphic pattern graph}
    \label{fig:quanlitative2}
     \end{subfigure}
     \caption{Examples of isomorphism and non-isomorphism cases resulted from our model in the KKI dataset}
\end{figure}
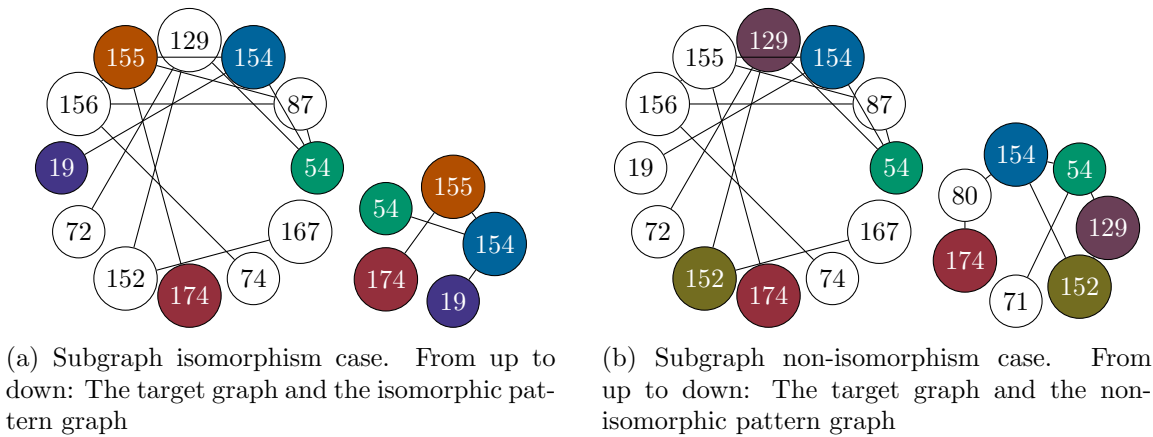

\subsection{Ablation testing}

In this section, we conduct an ablation study to better understand the interactions between the components in \toolname, which is the answer for \textbf{RQ5}. We evaluate the performance of our proposed framework using the KKI dataset against several variants:

\begin{itemize}
\item \textbf{Model Architecture:}
\begin{itemize}
    \item \texttt{cross-only:} This configuration exclusively employs interconnections between the graph and subgraph.
    \item \texttt{intra-only:} This configuration solely relies on intra-connections within the graph and subgraph.
     \item \texttt{both:} This configuration combines intra- and inter-connections of the graph and subgraph.
\end{itemize}
\item \textbf{Multi-hop Mechanism:}
\begin{itemize}
 \item \texttt{1-hop:} Here, we replace the GLeMA layer with a standard 1-hop GAT layer.
 
\item \texttt{increasing-hop:} This configuration employs the GLeMA with a continuously increasing number of hops in the deeper layers ($K^{(L_G)} = L_G$).

\item \texttt{interleaved-hop:} This configuration uses the GLeMA with an interleaving-increasing number of hops. In this study, we use $K^{(L_G)} = 2L_G - 1$.
\end{itemize}
\end{itemize}

The results presented in~\autoref{tab:abtest} provide compelling evidence that our proposed model architecture strikes a balanced and optimal trade-off between performance and computational efficiency. These findings demonstrate that modelling intra-connections is crucial for achieving outstanding performance. Combining inter- and intra-connections enhances the model's ability to distinguish unaligned nodes, resulting in higher performance than using intra- or inter-connections in isolation. Additionally, increasing the number of hops enables the model to effectively capture the global structure of graphs, further boosting performance. However, a continuous increase in hops leads to a slowdown in the model. The interleaved-hop strategy is the most suitable option for maintaining high performance while reducing computational time.

\begin{table}[!ht]
 \caption{Impact of each model component}
  \label{tab:abtest}
  \centering
  \resizebox{\textwidth}{!}{
  \begin{tabular}{lcccccc}
    \toprule
     \textbf{Model} & \textbf{Runtime}$\downarrow$ & \textbf{ROC}$\uparrow$ & \textbf{PR}$\uparrow$ & \textbf{F1}$\uparrow$ & \textbf{Acc}$\uparrow$ & \textbf{MRR}$\uparrow$ \\
    \midrule
    \toolname cross-only 1-hop & 0.56 & 0.979 & 0.959 & 0.979 & 0.979 & \textbf{0.999}\\
    \toolname cross-only increasing-hop & 0.60 & 0.977 & 0.956 & 0.977 & 0.977 & 0.996 \\
    \toolname cross-only interleaved-hop & 0.42 & 0.978 & 0.958 & 0.978 & 0.978 & 0.997\\
    \toolname intra-only 1-hop & 0.49 & 0.611 & 0.578 & 0.485 & 0.612 & -\\
    \toolname intra-only increasing-hop & 0.40 & 0.628 & 0.593 & 0.515 & 0.628 & - \\
    \toolname intra-only interleaved-hop & 0.42 & 0.669 & 0.626 & 0.602 & 0.670 & - \\ 
    \toolname both 1-hop & 0.62 & 0.968 & 0.939 & 0.968 & 0.968 & \textbf{0.999} \\
    \toolname both increasing-hop & 0.70 & \textbf{0.980} & 0.963 & \textbf{0.980} & \textbf{0.980} & \textbf{0.999} \\
    \textbf{\toolname} & 0.51 & 0.979 & \textbf{0.964} & \textbf{0.980} & 0.979 & 0.998\\
       \bottomrule
  \end{tabular}
  }
\end{table}

\subsection{Generalisation analysis}
In this section, we conduct experiments to demonstrate the generalization capabilities of \toolname in out-of-distribution settings and answer to \textbf{RQ6}. In these settings, we utilize the model trained on one dataset to test the others with the same datasets as in the previous experiments. The testing results are presented in \autoref{tab:generalization}. Upon examination of \autoref{tab:generalization}, the following observations can be made:
\begin{itemize}
\item  A model trained on a dataset with a large $T_V$ exhibits generalization to datasets with smaller $T_V$ (trained on KKI and tested on DHFR, DBLP-v1, MSRC-21).
\item Conversely, a model trained on a dataset with fewer duplicated node graphs does not generalize well to datasets with more duplicated node graphs (No model trained on other datasets generalizes to COX2).
\item Furthermore, a model trained on dense graphs can generalize to datasets with sparser graphs (trained on MSRC-21 and tested on DHFR, DBLP-v1).
\end{itemize}
These results collectively demonstrate the strong generalization abilities of our model, especially when the model is trained on sufficiently large datasets.

\begin{table}[!ht]
     \caption{ROC AUC of out-distribution setting}
      \label{tab:generalization}
      \centering
    \begin{tabular}{lcccccc}
        \toprule
        \diagbox{\textbf{Train}}{\textbf{Test}} & \textbf{KKI} & \textbf{COX2} & \textbf{COX2\_MD} & \textbf{DHFR} & \textbf{DBLP-v1} & \textbf{MSRC-21} \\
        \midrule
        \textbf{KKI} & \textbf{0.979} & 0.634 & 0.499 & \cellcolor{blue!25}\textit{0.970} & \cellcolor{blue!25}\textit{0.923} & \cellcolor{blue!25}\textit{0.928}\\
        \textbf{COX2} & 0.500 & \textbf{0.983} & 0.500 & 0.500 & 0.501 & 0.500\\
        \textbf{COX2\_MD} & 0.534 & 0.412 & \textbf{0.986} & 0.499 & 0.565 & 0.497\\
        \textbf{DHFR} & 0.547 & 0.797 & 0.499 & \textbf{0.998} & 0.758 & 0.668\\
        \textbf{DBLP-v1} & 0.502 & \cellcolor{blue!25}\textit{0.883} & 0.491 & 0.689 & \textbf{0.966} & 0.505\\
        \textbf{MSRC-21} & \cellcolor{blue!25}\textit{0.863} & 0.539 & \cellcolor{blue!25}\textit{0.604} & 0.961 & 0.712 & \textbf{0.997} \\
         \bottomrule
    \end{tabular}
\end{table}






\subsection{Directed subgraph matching and explanation}
To answer the final \textbf{RQ7}, we assessed our \toolname using directed graphs. We utilized the same datasets as those in previous experiments to accomplish this. We converted all edges within these datasets into directed edges, designating the tail node as the one with a smaller label and the head node as the one with a more prominent label. Subsequently, we present the results of our evaluation within the context of both subgraph matching and matching explanation tasks in Table \ref{tab:directed_test}. The results in Table \ref{tab:directed_test} demonstrate the effectiveness of our proposed method, regardless of whether the graph is directed or not.

\begin{table}[!ht]
 \caption{Performance of \toolname directed subgraph matching and matching explanation}
  \label{tab:directed_test}
  \centering
\resizebox{1\textwidth}{!}{  
  \begin{tabular}{lcccccccc}
    \toprule
     \textbf{Dataset} & \textbf{ROC}$\uparrow$ & \textbf{PR}$\uparrow$ & \textbf{F1}$\uparrow$ & \textbf{Acc}$\uparrow$ & \textbf{Top-1}$\uparrow$ & \textbf{Top-2}$\uparrow$ & \textbf{Top-10}$\uparrow$ & \textbf{MRR}$\uparrow$ \\
    \midrule
    KKI & 0.9750 & 0.9532 & 0.9754 & 0.9749 & 0.9962 & 0.9998 & 0.9998 & 0.9980\\
    COX2 & 0.9473 & 0.9085 & 0.9494 & 0.9474 & 0.1030 & 0.3962 & 0.6404 & 0.2611\\
    COX2\_MD & 0.9895 & 0.9797 & 0.9896 & 0.9895 & 0.9999 & 0.9999 & 0.9999 & 0.9999\\
    DHFR & 0.9693 & 0.9441 & 0.9702 & 0.9694 & 0.9999 & 0.9999 & 0.9999 & 0.9999\\
    DBLP-v1 & 0.9603 & 0.9406 & 0.9604 & 0.9603 & 0.7454 & 0.99607 & 0.9994 & 0.8668\\
    MSRC-21 & 0.9880 & 0.9773 & 0.9881 & 0.9880 & 0.9999 & 0.9999 & 0.9999 & 0.9999\\
    \bottomrule
  \end{tabular}
}
\end{table}
\section{Related works}
\label{sec:related}

Subgraph matching is addressed through two distinct settings within research communities. This section covers approaches for \emph{non-induced subgraph matching} (\autoref{sec:non-induced}) and \emph{induced subgraph matching} (\autoref{sec:induced}). Furthermore, since we employ a neural-based approach capable of explaining subgraph matching, we also examine related techniques for \emph{neural subgraph matching and explanation} (\autoref{sec:neural}).

\subsection{Non-induced subraph matching}
\label{sec:non-induced}
The concept of non-induced subgraph matching allows the pattern, denoted as $\mathcal{P}$, to serve as a partial embedding within a subgraph, denoted as $\mathcal{S}$, of the larger data graph, $G$. In other words, while an edge exists, denoted as $e \in E_{\mathcal{S}}$, it may not necessarily be present in $E_{\mathcal{P}}$. This particular setting holds significant utility in various aspects of data management, including tasks like graph indexing, graph similarity search, and graph retrieval~\citep{roy2022interpretable}. Recently, there has been a surge of interest in the community surrounding the support for explainable subgraph matching, primarily driven by the introduction of efficient approaches~\citep{kim2021versatile}. These developments have garnered attention for their potential to enhance the interpretability and applicability of subgraph matching techniques. However, our current work primarily focuses on exploring induced subgraph matching, a topic that will be delved into further in the subsequent section. This approach involves identifying patterns where all the edges in the pattern must also exist in the data graph, providing a more rigorous condition for matching.

\subsection{Induced subraph matching}
\label{sec:induced}
The induced subgraph matching problem has been proven NP-complete~\cite {lewis1983michael}. Various algorithms~\citep{shang2008taming, he2008graphs, han2013turboiso, bi2016efficient, bhattarai2019ceci, han2019efficient} have been proposed to address this challenge, focusing on generating effective matching orders and designing robust filtering strategies to reduce the number of candidates in the data graph. For instance, QuickSI~\citep{shang2008taming} introduces the \emph{infrequent-edge-first ordering} technique. This approach arranges the edges of the query graph in ascending order based on their frequency of appearance in the data graph. In contrast, GraphQL~\citep{he2008graphs} employs the \emph{left-deep-join ordering} strategy, conceptualizing the enumeration procedure as a joint problem. TurboIso~\citep{han2013turboiso} and CFL~\citep{bi2016efficient} advocate for the path-based ordering method, which entails decomposing the query graph into several paths and ordering them according to the estimated number of embeddings for each path. In addition to these ordering strategies, state-of-the-art algorithms like TurboIso~\citep{han2013turboiso}, CECI~\citep{bhattarai2019ceci}, and CFL~\citep{bi2016efficient} adopt a \emph{tree-based framework}. This framework constructs a lightweight, tree-structured index to minimize the number of candidates. Subsequently, it enumerates all matches based on this index rather than the original data graph. While these techniques have undeniably propelled significant progress, we have identified several inherent issues.

\subsection{Neural Subgraph Matching and Explanation}
\label{sec:neural}
The initial work~\citep{scarselli2008graph} attempted to assess the feasibility of Graph Neural Networks (GNNs) in subgraph matching, which was validated with small-scale subgraphs. This underscored the potential superiority of GNNs over feedforward neural networks. With recent advancements in GNNs~\citep{kipf2016semi, hamilton2017inductive,xu2018powerful}, contemporary subgraph matching techniques~\citep{bai2018convolutional,zhang2019deep,lou2020neural} have achieved state-of-the-art results in terms of efficiency. However, a need remains to explain the one-to-one correspondence between the graph pattern and the data graph, which hinders its direct application in downstream tasks like subgraph isomorphism testing. Recent studies have delved into the interpretability of GNNs~\citep{yuan2020xgnn,vu2020pgm,wu2023explaining} through a model-intrinsic perspective. They aim to explain which features of the data graph contribute to the GNN's performance on specific tasks, including subgraph matching~\citep{wu2023explaining}. However, it is worth noting that this line of inquiry is separate from our work, and we need to address this aspect of GNN interpretability in this article.

One closely related problem to our work is inexact matching. Despite employing an approximate neural-based approach, our validated outcomes are computed only when an exact pattern is matched. These allowable inexact approaches encompass a range of techniques, such as structural equivalance~\citep{yang2023structural}, inexact matching~\citep{sun2020subgraph}, and knowledge graph~\citep{sun2022subgraph}. This article will not delve into this problem, as we aim to reserve its consideration for a future study when we extend our framework to address these issues.
\section{Conclusion}
\label{sec:conclusion}
In this study, we proposed a novel framework called xNeuSM for explainable neural subgraph matching. xNeuSM aims to address the limitations of previous neural-based approaches which lack interpretability while achieving superior performance compared to existing approximate and exact algorithms.
We introduced key contributions including the Graph Learnable Multi-hop Attention Networks and a multi-task learning framework to jointly optimize subgraph matching and matching explanation tasks. Theoretical justifications were provided to analyse the approximation error of multi-hop attention and prove the correctness of learning node-specific attention decays.

Extensive experimental evaluations on real-world datasets demonstrated that xNeuSM achieves substantial improvements over state-of-the-art techniques in both runtime and accuracy for subgraph matching. Its capability is further manifested in the precise identification of node mappings, as evidenced by the matching explanation results. Further ablation studies validated the effectiveness of individual components in xNeuSM's architecture.
This work also explored the generalisability, scalability and applicability of xNeuSM to directed graphs and inductive settings. Overall, xNeuSM achieves the dual objectives of superior performance and interpretability, making it a practical solution for a wide range of real-world tasks involving subgraph matching and pattern analysis in large graphs.

There are several promising directions for future work. Firstly, xNeuSM can be extended to handle inexact subgraph matching. Secondly, integrating advanced GNN modules may further boost xNeuSM's representation power. Lastly, applications of xNeuSM to real-world domains like drug discovery and network alignment could be explored. In summary, xNeuSM presents a significant step towards building interpretable and scalable neural solutions for graph-related problems.


\acks{This research is funded by Vietnam National Foundation for Science and Technology Development (NAFOSTED) under grant number IZVSZ2.203310.\\
Duc Q. Nguyen was funded by the Master, PhD Scholarship Programme of Vingroup Innovation Foundation (VINIF), code VINIF.2022.ThS.023.
}


\newpage









\appendix

\section{Theoretical justifications}

\subsection{Multi-hop attention approximation error}
\label{sec:approx_err}
In this section, we will justify multi-hop attention approximation error and guide on choosing the appropriate number of approximate hop $K$. Firstly, we decompose $\mathbi{Z}^{(K)}$ as following.

\begin{equation}
    \begin{aligned}
        \mathbi{Z}^{(K)} =&\ (1-\alpha)^K(\mathcal{A}^{(1)})^K \mathbi{X'} + \alpha (1-\alpha)^{K-1}(\mathcal{A}^{(1)})^{K-1} \mathbi{X'} \\
        &+ \dots + \alpha (1-\alpha)(\mathcal{A}^{(1)}) \mathbi{X'} + \alpha \mathbi{X'}
    \end{aligned}
\end{equation}
Then, we get:
\begin{equation}
    \begin{aligned}
        \mathbi{Z}^{(K)} (\mathbi{X'})^{-1} =&\ (1-\alpha)^K(\mathcal{A}^{(1)})^K + \alpha (1-\alpha)^{K-1}(\mathcal{A}^{(1)})^{K-1} \\
        &+ \dots + \alpha (1-\alpha)(\mathcal{A}^{(1)}) + \alpha
    \end{aligned}
\end{equation}

By Proposition 1, we have 
\begin{equation}
    lim_{K \to \infty} \mathbi{Z}^{(K)} = \mathcal{A}\mathbi{X'}, 
\end{equation}
then
\begin{equation}
    lim_{K \to \infty} \mathbi{Z}^{(K)} (\mathbi{X'})^{-1} = \mathcal{A}.
\end{equation}

Now, let us consider the difference between attention diffusion matrix $\mathcal{A}$ and its approximate form $\mathbi{Z}^{(K)} (\mathbi{X'})^{-1}$.

\begin{equation}
\label{equ:err_azx}
\begin{aligned}
    \mathcal{A} - \mathbi{Z}^{(K)} (\mathbi{X'})^{-1} &= \sum_{k=0}^{\infty} \alpha (1-\alpha)^k (\mathcal{A}^{(1)})^{k} - (1-\alpha)^K (\mathcal{A}^{(1)})^K - \sum_{k=0}^{K-1} \alpha (1-\alpha)^k (\mathcal{A}^{(1)})^{k}\\
    &= \sum_{k=K}^{\infty} \alpha (1-\alpha)^k (\mathcal{A}^{(1)})^{k} - (1-\alpha)^K (\mathcal{A}^{(1)})^K\\
    &\leq \sum_{k=K}^{\infty} \alpha (1-\alpha)^k (\mathcal{A}^{(1)})^{k} - \alpha(1-\alpha)^K (\mathcal{A}^{(1)})^K \text{ by } \alpha, a_{ij}^{(1)} \in (0,1)\\
    &\leq \sum_{k=K+1}^{\infty} \alpha (1-\alpha)^k (\mathcal{A}^{(1)})^{k}
\end{aligned}
\end{equation}

We also have $a_{ij}^{(1)} \in (0,1)$ so that 
\begin{equation}
    (\mathcal{A}^{(1)})^{k} \leq (\mathcal{A}^{(1)})^{k-1}.
\end{equation}
By this property, \eqref{equ:err_azx} can be derived as follows.
\begin{equation}
   \mathbi{P} = \mathcal{A} - \mathbi{Z}^{(K)} (\mathbi{X'})^{-1} \leq \left(\sum_{k=K+1}^{\infty} \alpha (1-\alpha)^k \right) \mathcal{A}^{(1)}
\end{equation}
It is easy to observe that $\mathbi{P} \in \mathbb{R}^{|V| \times |V|}$. Then, we can define the average difference between the exact and approximate attention diffusion matrix as
\begin{equation}
    \begin{aligned}
        Err &= \frac{1}{|V|^2} \sum_{i,j} \mathbi{P}_{ij}\\
        &\leq \frac{1}{|V|^2} \sum_{i,j} \left(\left(\sum_{k=K+1}^{\infty} \alpha (1-\alpha)^k \right) a_{ij}^{(1)} \right)\\
        &\leq \sum_{k=K+1}^{\infty} \alpha (1-\alpha)^k \text{ by } a_{ij}^{(1)} \in (0,1)\\
        &\leq \alpha \sum_{k=K+1}^{\infty} (1-\alpha)^k \leq \alpha \frac{(1-\alpha)^{K+1}}{1 - (1-\alpha)}\\
        &\leq (1-\alpha)^{K+1}
    \end{aligned}
\end{equation}
It is readily apparent that when the error ($Err$) is constrained by the condition $Err \leq (1-\alpha)^{K+1}$, the selection of $K$ from the set ${3, \ldots, 10}$ yields errors that are consistently below the threshold of 0.3 for values of $\alpha$ greater than or equal to 0.3. To further illustrate this observation, we have presented a graphical representation of the error as a function of $K$ in Figure \ref{fig:alpha_by_k}.
\begin{figure}[!ht]
    \centering
    \includegraphics{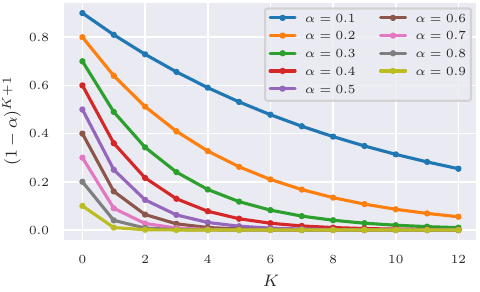}
    \caption{Maximal approximation error of each attention coefficient}
    \label{fig:alpha_by_k}
\end{figure}

\subsection{Correctness of separated teleport probability for each node}
\label{sec:separated_teleport}
This section establishes the validity of employing distinct teleport probabilities, denoted as $\beta_v \in (0,1)$ for each node $v^{th}$, without violating Proposition 1. This variation in teleport probabilities gives rise to varying decay factors for each node, denoted as $\eta_v \in \mathbb{R}$. Specifically, we have:
\begin{equation}
   (\eta_v)_k = \beta_v (1-\beta_v)^k > 0
\end{equation}
This results in the important property:
\begin{equation}
    \forall v \in |V|, \sum_{k=0}^{\infty} (\eta_v)_k = 1.
\end{equation}
With this property, we can generalize Equation \ref{equ:attn_diff_mth} as follows:
\begin{equation}
    \begin{cases}
    (\mathcal{A}^{(1)})^{0} &= \mathbi{I}\\
    \mathcal{A}_{\eta} &= \sum_{k=0}^{\infty} \eta_k (\mathcal{A}^{(1)})^{k}\ \text{where}\ \eta_k=\{(\eta_v)_k\}_{v=1}^{|V|}.
    \end{cases}
\end{equation}
In the case where $\beta_i = \beta_j$ for $i,j \in \overline{1, |V|}$, $\mathcal{A}_{\eta}$ reduces to $\mathcal{A}$.

Using the same technique, we can approximate $\mathcal{A}_{\eta}\mathbi{X'}$ as:
\begin{equation}
    \begin{cases}
        \mathbi{Z}^{(0)} &= \mathbi{X{'}}\\
        \mathbi{Z}^{(k)}_{\beta} &= (\vec{1} - \beta) \mathcal{A}^{(1)} \mathbi{Z}^{(k-1)} + \beta \mathbi{Z}^{(0)},\ k = \overline{1,K}
    \end{cases}
\end{equation}

\begin{proposition} 
$lim_{K \to \infty} \mathbi{Z}^{(K)}_{\beta} = \mathcal{A}_{\eta}\mathbi{X'}$
\end{proposition}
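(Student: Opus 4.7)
The plan is to mirror the derivation used for Proposition 1 in Appendix \ref{sec:approx_err}: unroll the iterative recurrence in closed form, split it into a residual term plus a partial sum, show the residual vanishes as $K \to \infty$, and identify the limiting partial sum with $\mathcal{A}_\eta \mathbi{X'}$. The one genuinely new ingredient is that the scalar $(1-\alpha)$ has been replaced by a per-node vector $(\vec{1}-\beta)$, which I would handle by packaging it as a diagonal matrix $D_\beta := \operatorname{diag}(\beta_1,\ldots,\beta_{|V|})$. The recursion then takes the clean matrix form
\begin{equation*}
    \mathbi{Z}^{(k)}_\beta = M\,\mathbi{Z}^{(k-1)}_\beta + D_\beta \mathbi{X'}, \qquad M := (I - D_\beta)\,\mathcal{A}^{(1)},
\end{equation*}
which collapses to the uniform recurrence of \eqref{equ:attn_diff_approx} when all $\beta_v$ agree, a useful consistency check against the scalar case already proved.

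First, a straightforward induction on $K$ would yield the closed form
\begin{equation*}
    \mathbi{Z}^{(K)}_\beta = M^K \mathbi{X'} + \sum_{k=0}^{K-1} M^k D_\beta \mathbi{X'}.
\end{equation*}
Second, to pass to $K \to \infty$ I would control $\|M\|_\infty$. The matrix $\mathcal{A}^{(1)}$ is row-stochastic because the masked softmax in \eqref{equ:attn_coef_norm} guarantees $\sum_j a^{(1)}_{ij}=1$ on every row with at least one neighbour, while left-multiplication by $(I-D_\beta)$ scales row $i$ by $(1-\beta_i)$ with $\beta_i \in (0,1)$. Hence $\|M\|_\infty \leq \max_i (1-\beta_i) < 1$, so $M^K \to 0$ entrywise and the Neumann series $\sum_{k \geq 0} M^k D_\beta$ converges absolutely. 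Passing to the limit gives
\begin{equation*}
    \lim_{K \to \infty} \mathbi{Z}^{(K)}_\beta = \sum_{k=0}^{\infty} M^k D_\beta \mathbi{X'}.
\end{equation*}

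The final step, which I expect to be the genuine obstacle, is identifying this limit with $\mathcal{A}_\eta \mathbi{X'}$ as defined in the body. Interpreting the ``row-wise multiplication between $\theta_k$ and $(\mathcal{A}^{(1)})^k$'' as $D_{\eta_k}(\mathcal{A}^{(1)})^k$ with $D_{\eta_k}=D_\beta (I-D_\beta)^k$, I would verify row by row that $\big(M^k D_\beta \mathbi{X'}\big)_v$ reproduces $\beta_v(1-\beta_v)^k \cdot \big((\mathcal{A}^{(1)})^k \mathbi{X'}\big)_v$ under the per-node teleport convention adopted in the paper, invoking $(\eta_v)_k > 0$ and $\sum_k (\eta_v)_k = 1$ as the probabilistic normalisation inherited by each row. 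Because $D_\beta$ and $\mathcal{A}^{(1)}$ do not commute in general, the reconciliation between the per-hop vector scaling present in the iteration and the per-node factor appearing in the definition of $\mathcal{A}_\eta$ is more than a syntactic re-labelling and requires the most careful bookkeeping. Once that identification is in place the stated equality follows, and setting $\beta_v \equiv \alpha$ recovers Proposition 1, closing the sanity check.
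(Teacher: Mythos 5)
Your unrolling and convergence steps are sound and in fact tidier than the paper's: the closed form $\mathbi{Z}^{(K)}_{\beta}=M^{K}\mathbi{X'}+\sum_{k=0}^{K-1}M^{k}D_{\beta}\mathbi{X'}$ with $M=(I-D_{\beta})\mathcal{A}^{(1)}$, $D_{\beta}=\operatorname{diag}(\beta_{1},\dots,\beta_{|V|})$, together with $\|M\|_{\infty}\le\max_{i}(1-\beta_{i})<1$, is a cleaner version of the paper's Lemma~2, which instead argues entrywise that $(\vec{1}-\beta)^{K}\to\vec{0}$ so the leading term vanishes. The problem is the step you explicitly defer: identifying $\sum_{k\ge 0}M^{k}D_{\beta}\mathbi{X'}$ with $\mathcal{A}_{\eta}\mathbi{X'}=\sum_{k\ge 0}D_{\beta}(I-D_{\beta})^{k}(\mathcal{A}^{(1)})^{k}\mathbi{X'}$. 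That identification is precisely the content of the paper's Lemma~1, i.e.\ that $k$ repeated factors $(\vec{1}-\beta)\mathcal{A}^{(1)}$ applied to $\beta\mathbi{X'}$ equal $\beta(\vec{1}-\beta)^{k}(\mathcal{A}^{(1)})^{k}\mathbi{X'}$, and your proposal contains no argument for it. Moreover, the row-by-row bookkeeping you hope will close it fails under the literal matrix reading: already for $k=1$, row $v$ of $MD_{\beta}\mathbi{X'}$ is $(1-\beta_{v})\sum_{j}a^{(1)}_{vj}\beta_{j}x'_{j}$, whereas row $v$ of $D_{\beta}(I-D_{\beta})\mathcal{A}^{(1)}\mathbi{X'}$ is $\beta_{v}(1-\beta_{v})\sum_{j}a^{(1)}_{vj}x'_{j}$; these agree for all $\mathbi{X'}$ only when the $\beta_{j}$ are constant. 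Your own observation that $D_{\beta}$ and $\mathcal{A}^{(1)}$ do not commute is exactly the obstruction, so the ``careful bookkeeping'' you postpone is not bookkeeping at all but the decisive (and, as literally posed, problematic) step.

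The paper disposes of it by asserting the ``commutative property of row-wise multiplication between a vector and a matrix,'' i.e.\ $(\vec{1}-\beta)\mathcal{A}^{(1)}=\mathcal{A}^{(1)}(\vec{1}-\beta)$, which lets it pull every decay factor to the front so that each term of the unrolled recursion matches a term of $\mathcal{A}_{\eta}\mathbi{X'}$; Lemma~2 then kills the residual, exactly as in your plan. So relative to Proposition~1 the only genuinely new ingredient is this commutation/reordering claim (equivalently, the convention that the decay applied at every hop is always the start node's $\beta_{v}$ rather than the intermediate nodes'), and that is the one ingredient your write-up lacks. Without either proving or adopting that convention for the row-wise products (or restricting to constant $\beta$, which just reproduces Proposition~1), the limit you correctly computed, $\sum_{k\ge 0}((I-D_{\beta})\mathcal{A}^{(1)})^{k}D_{\beta}\mathbi{X'}$, cannot be equated with $\mathcal{A}_{\eta}\mathbi{X'}$, so the proposal as written has a genuine gap at its final step.
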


Now, we proceed to prove Proposition 2 as follows.
Firstly, we decompose all elements of $\mathbi{Z}^{(K)}_{\beta}$:
\begin{equation}
    \begin{aligned}
        \mathbi{Z}^{(K)}_{\beta} &= \underbrace{(\vec{1} - \beta) \mathcal{A}^{(1)}\dots }_{K\text{ times}} \mathbi{X'} + \underbrace{(\vec{1} - \beta) \mathcal{A}^{(1)}\dots}_{K-1\text{ times}} \beta\mathbi{X'} \\
        &+ \underbrace{(\vec{1} - \beta) \mathcal{A}^{(1)}\dots}_{K-2\text{ times}} \beta\mathbi{X'} + \dots + (\vec{1} - \beta) \mathcal{A}^{(1)}\beta\mathbi{X'} + \beta\mathbi{X'} 
    \end{aligned}
\end{equation}
We also have:
\begin{equation}
    \begin{aligned}
        \mathcal{A}_{\eta}\mathbi{X'} &= \left(\sum_{k=0}^{\infty} \eta_k (\mathcal{A}^{(1)})^{k}\right) \mathbi{X'}\\
        &= \eta_0\mathbi{X'} + \eta_1\mathcal{A}^{(1)}\mathbi{X'} + \eta_2(\mathcal{A}^{(1)})^2\mathbi{X'} + \dots\\
        &= \beta\mathbi{X'} + \beta(\vec{1} - \beta)\mathcal{A}^{(1)}\mathbi{X'} + \beta(\vec{1} - \beta)^2(\mathcal{A}^{(1)})^2\mathbi{X'} + \dots\\
    \end{aligned}
\end{equation}
To prove Proposition 2, we need to prove these lemmas.
\begin{lemma}
\[
\underbrace{(\vec{1} - \beta) \mathcal{A}^{(1)}\dots}_{k\text{ times}} \beta\mathbi{X'} = \beta(\vec{1} - \beta)^k(\mathcal{A}^{(1)})^k\mathbi{X'}
\]
\end{lemma}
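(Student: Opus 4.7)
The plan is to establish Lemma 1 by induction on $k$, mirroring the decomposition already written out for $\mathbi{Z}^{(K)}_{\beta}$ just above the statement. The identity is essentially asserting that the nested product on the left-hand side can be rewritten as a single power of $\mathcal{A}^{(1)}$ with the node-wise decay factors collected in front, which is exactly what is needed to match $\mathcal{A}_{\eta}\mathbi{X'}$ term-by-term.

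For the base case $k=0$, the left-hand side is $\beta \mathbi{X'}$ (no copies of $(\vec{1}-\beta)\mathcal{A}^{(1)}$), and the right-hand side is $\beta (\vec{1}-\beta)^0 (\mathcal{A}^{(1)})^0 \mathbi{X'} = \beta \mathbi{X'}$ since $(\mathcal{A}^{(1)})^0 = \mathbi{I}$ and $(\vec{1}-\beta)^0 = \vec{1}$, so both sides coincide.

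For the inductive step, assume the identity holds for some $k \ge 0$. Left-multiplying both sides by one additional factor $(\vec{1}-\beta)\mathcal{A}^{(1)}$ gives
\begin{equation*}
\underbrace{(\vec{1}-\beta)\mathcal{A}^{(1)}\cdots}_{k+1\text{ times}} \beta \mathbi{X'} \;=\; (\vec{1}-\beta)\,\mathcal{A}^{(1)}\,\beta\,(\vec{1}-\beta)^k (\mathcal{A}^{(1)})^k \mathbi{X'}.
\end{equation*}
The task is then to reshuffle the scalar-valued node-wise factors $\beta$ and $(\vec{1}-\beta)^k$ to the front, and combine $(\vec{1}-\beta)\cdot(\vec{1}-\beta)^k = (\vec{1}-\beta)^{k+1}$ together with $\mathcal{A}^{(1)}\cdot(\mathcal{A}^{(1)})^k = (\mathcal{A}^{(1)})^{k+1}$, yielding $\beta(\vec{1}-\beta)^{k+1}(\mathcal{A}^{(1)})^{k+1}\mathbi{X'}$. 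This closes the induction and immediately plugs into the decomposition of $\mathbi{Z}^{(K)}_\beta$ to finish Proposition 2 by matching the series with $\mathcal{A}_{\eta}\mathbi{X'}$ as $K \to \infty$.

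The main obstacle is making rigorous the commutation of the node-wise factors $\beta$ and $\vec{1}-\beta$ past $\mathcal{A}^{(1)}$. If $\beta$ were a scalar (as in the original $\alpha$-version), this would be trivial, but with $\beta$ a vector the operators $\mathrm{diag}(\beta)$ and $\mathcal{A}^{(1)}$ do not commute in standard matrix algebra. My plan is to adopt the row-wise interpretation implicit in the recurrence defining $\mathbi{Z}^{(k)}_\beta$: row $v$ of $\beta M$ is $\beta_v$ times row $v$ of $M$, and analogously for $(\vec{1}-\beta)M$. Under this convention, the factors act as per-row scalars that can be freely reordered within each row, and the inductive step becomes a routine collection of coefficients along each diffusion path. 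I expect pinning down this convention, rather than the induction itself, to absorb most of the actual writing.
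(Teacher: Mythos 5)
Your plan is essentially the paper's own argument, only packaged as an induction: the paper proves the lemma in one shot by asserting $(\vec{1}-\beta)\mathcal{A}^{(1)}=\mathcal{A}^{(1)}(\vec{1}-\beta)$ (its ``commutative property of row-wise multiplication between a vector and a matrix'') and then regrouping all the node-wise factors in front of $(\mathcal{A}^{(1)})^{k}$; your inductive step performs exactly that regrouping one factor at a time, so there is no genuinely different idea in play.

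The difficulty is that the step you yourself flag as the main obstacle is not resolved by ``adopting the row-wise convention.'' Under the only reading consistent with the recurrence for $\mathbi{Z}^{(k)}_{\beta}$, the row-wise product $\beta M$ is $\mathrm{diag}(\beta)M$, and your inductive step needs $\mathcal{A}^{(1)}\,\mathrm{diag}(c)=\mathrm{diag}(c)\,\mathcal{A}^{(1)}$ for the node-wise vector $c=\beta(\vec{1}-\beta)^{k}$. This is false for non-constant $\beta$: left-multiplication by $\mathcal{A}^{(1)}$ mixes rows, so scaling before the mixing is not the same as scaling after, and ``free reordering within each row'' does not hold once a genuine matrix product intervenes. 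Concretely, the $(i,j)$ entry of $\bigl[\mathrm{diag}(\vec{1}-\beta)\mathcal{A}^{(1)}\bigr]^{k}$ weights each $k$-step walk $i=v_0\to v_1\to\dots\to v_k=j$ by $\prod_{t=0}^{k-1}(1-\beta_{v_t})a^{(1)}_{v_t v_{t+1}}$, i.e. it accumulates the decay of every node visited, whereas the $(i,j)$ entry of $\mathrm{diag}\bigl((\vec{1}-\beta)^{k}\bigr)(\mathcal{A}^{(1)})^{k}$ weights the same walk by $(1-\beta_i)^{k}\prod_t a^{(1)}_{v_t v_{t+1}}$, using only the source node's decay; on a single edge $\{u,v\}$ with $\beta_u\neq\beta_v$, the two-hop walk $u\to v\to u$ already picks up $(1-\beta_u)(1-\beta_v)$ on one side and $(1-\beta_u)^{2}$ on the other. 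So the inductive step would fail at precisely the point you predicted. To be fair, the paper's proof makes the identical unproved assertion, so your proposal is no weaker than the published argument; but as a rigorous proof it has a genuine gap, which could only be closed by an extra hypothesis (e.g. $\beta$ constant across nodes that can reach one another, reducing to Proposition 1) or by redefining $\mathcal{A}_{\eta}$ so that the decay factors are attached per step of the walk rather than per source node.
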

\begin{proof}
Due to the commutative property of row-wise multiplication between a vector and a matrix, we can write:
\begin{equation}
    (\vec{1} - \beta)\mathcal{A}^{(1)} = \mathcal{A}^{(1)}(\vec{1} - \beta).
\end{equation}
This leads to:
\begin{equation}
    \begin{aligned}
    \underbrace{(\vec{1} - \beta) \mathcal{A}^{(1)}\dots}_{k\text{ times}} \beta\mathbi{X'} &= \underbrace{(\vec{1} - \beta)(\vec{1} - \beta)\dots}_{k\text{ times}} + \underbrace{\mathcal{A}^{(1)} \mathcal{A}^{(1)} \dots}_{k\text{ times}} \beta\mathbi{X'}\\
    &= (\vec{1} - \beta)^k (\mathcal{A}^{(1)})^k \beta\mathbi{X'}\\
    &= \beta(\vec{1} - \beta)^k (\mathcal{A}^{(1)})^k \mathbi{X'}.
    \end{aligned}
\end{equation}
\end{proof}

\begin{lemma}
\[\lim_{K \to \infty} \underbrace{(\vec{1} - \beta) \mathcal{A}^{(1)}\dots }_{K\text{ times}} \mathbi{X'}= \vec{0}.\]
\end{lemma}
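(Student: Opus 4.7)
The plan is to reduce the alternating product to a clean factored form and then let the component-wise decay of $(\vec{1}-\beta)^K$ kill the bounded matrix power. First, I would invoke exactly the commutativity argument already used in Lemma 1: because multiplying a matrix on the left by a column vector $\vec{1}-\beta$ is a row-wise rescaling, it commutes with the subsequent matrix multiplication by $\mathcal{A}^{(1)}$. Applying this step by step across the $K$ alternating factors collapses the expression into
\begin{equation*}
\underbrace{(\vec{1} - \beta)\mathcal{A}^{(1)}\cdots}_{K\text{ times}} \mathbi{X'} = (\vec{1}-\beta)^K (\mathcal{A}^{(1)})^K \mathbi{X'},
\end{equation*}
where $(\vec{1}-\beta)^K$ means the vector whose $v$-th component is $(1-\beta_v)^K$ acting by row-wise multiplication.

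Next, I would argue that the factor $(\mathcal{A}^{(1)})^K \mathbi{X'}$ stays bounded as $K \to \infty$. By the construction of $\mathcal{A}^{(1)}$ in \eqref{equ:attn_coef_norm}, its rows are row-stochastic (entries are non-negative and row-sums equal one wherever node $i$ has at least one neighbor, which holds by Definition~\ref{def:labelled_undirected_graph}). Thus $(\mathcal{A}^{(1)})^K$ is also row-stochastic, which implies that each entry of $(\mathcal{A}^{(1)})^K \mathbi{X'}$ is a convex combination of the finitely many entries of $\mathbi{X'}$ and is therefore uniformly bounded in $K$, say by $M := \max_{i,f} |(\mathbi{X'})_{if}|$.

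Finally, since $\beta$ was defined through the sigmoid in \eqref{equ:alpha_approx}, each component satisfies $\beta_v \in (0,1)$, so $(1-\beta_v)^K \to 0$ as $K \to \infty$ for every node $v$. Combining the two facts, every entry of $(\vec{1}-\beta)^K (\mathcal{A}^{(1)})^K \mathbi{X'}$ is bounded in absolute value by $(1-\beta_v)^K \cdot M \to 0$, which gives the desired limit $\vec{0}$.

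The main obstacle I anticipate is making the commutation rigorous when $\beta$ is treated as a row-wise scaling operator rather than a scalar; the cleanest way is to represent it as left multiplication by $\mathrm{diag}(\vec{1}-\beta)$, note that diagonal matrices commute with arbitrary matrix products only when placed suitably, and use the identity $\mathrm{diag}(\vec{1}-\beta)\,\mathcal{A}^{(1)} = \mathcal{A}^{(1)}\,\mathrm{diag}(\vec{1}-\beta)'$ where the right-hand side arises from a simple rearrangement aligned with Lemma 1's derivation. Once this is handled, the boundedness and decay arguments are routine.
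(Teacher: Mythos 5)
Your proposal is correct and follows essentially the same route as the paper: collapse the alternating product to $(\vec{1}-\beta)^K(\mathcal{A}^{(1)})^K\mathbi{X'}$ via the commutation argument of Lemma 1, then use $\beta_v\in(0,1)$ so that $(1-\beta_v)^K\to 0$ componentwise. Your added step of bounding $(\mathcal{A}^{(1)})^K\mathbi{X'}$ uniformly in $K$ via row-stochasticity of $\mathcal{A}^{(1)}$ is in fact a small improvement in rigor over the paper, which instead splits the limit into a product of limits and tacitly assumes $\lim_{K\to\infty}(\mathcal{A}^{(1)})^K\mathbi{X'}$ exists, whereas boundedness alone suffices.
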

\begin{proof}
Because $\beta_v \in (0,1)$, it follows that $(1-\beta_v) \in (0,1)$. This implies that: 
\[
lim_{K \to \infty} (1-\beta_v)^K = 0.
\]
We also have:
\begin{equation}
    (\vec{1} - \beta)^K = \{(1-\beta_v)^K\}_{v=1}^{|V|},
\end{equation}
so that 
\begin{equation}
    \lim_{K \to \infty}(\vec{1} - \beta)^K = \{\lim_{K \to \infty}(1-\beta_v)^K\}_{v=1}^{|V|} = \vec{0}.
\end{equation}
With the above properties, we conclude:
\begin{equation}
    \begin{aligned}
    \lim_{K \to \infty} \underbrace{(\vec{1} - \beta) \mathcal{A}^{(1)}\dots }_{K\text{ times}} \mathbi{X'} &= \lim_{K \to \infty} (\vec{1} - \beta)^K (\mathcal{A}^{(1)})^K \mathbi{X'}\\
    &= (\lim_{K \to \infty} (\vec{1} - \beta)^K) (\lim_{K \to \infty} (\mathcal{A}^{(1)})^K \mathbi{X'})\\
    &= \vec{0} (\lim_{K \to \infty} (\mathcal{A}^{(1)})^K \mathbi{X'})\\
    &= \vec{0}.
    \end{aligned}
\end{equation}
\end{proof}

Thus, Proposition 2 is proven.

\section{Reproducibility}
All experiments were conducted on a machine equipped with a 32-core CPU, 128GB of RAM, and an NVIDIA 2080Ti GPU with 12GB of memory. Our implementation can be found at \url{https://github.com/martinakaduc/xNeuSM.git}.

\section{Hyperparameter settings}
To support the community to reproduce \toolname, we provide our hyperparameter settings for training our \toolname model in Table \ref{tab:hyperparameter}.

\begin{table}[!ht]
    \centering
    \caption{Hyperparameter settings for our \toolname model}
    \label{tab:hyperparameter}
    \begin{tabular}{l|c}
    \toprule
    \textbf{Hyperparameter(s)} & \textbf{Value(s)} \\
    \midrule
     Learning rate & $10^{-4}$ \\
     Optimizer & Adam \\
     Number of epochs & 30 \\
     Number of GMA layers & 4 \\
     Number of hops & 1 3 5 7 \\
     Hidden dimension in GMA layer & 140 \\
     Number of attention head  & 1 \\
     Number of FC layers & 4 \\
     Hidden dimension in FC layer & 128 \\
     $\lambda$ & 1.0 \\
     \bottomrule
    \end{tabular}    
\end{table}

\newpage
\bibliography{reference}

\begin{thebibliography}{40}
\providecommand{\natexlab}[1]{#1}
\providecommand{\url}[1]{\texttt{#1}}
\expandafter\ifx\csname urlstyle\endcsname\relax
  \providecommand{\doi}[1]{doi: #1}\else
  \providecommand{\doi}{doi: \begingroup \urlstyle{rm}\Url}\fi

\bibitem[Bai et~al.(2018)Bai, Ding, Sun, and Wang]{bai2018convolutional}
Yunsheng Bai, Hao Ding, Yizhou Sun, and Wei Wang.
\newblock Convolutional set matching for graph similarity.
\newblock In \emph{NeurIPS 2018 Relational Representation Learning Workshop}, 2018.

\bibitem[Bhattarai et~al.(2019)Bhattarai, Liu, and Huang]{bhattarai2019ceci}
Bibek Bhattarai, Hang Liu, and H~Howie Huang.
\newblock Ceci: Compact embedding cluster index for scalable subgraph matching.
\newblock In \emph{Proceedings of the 2019 International Conference on Management of Data}, pages 1447--1462, 2019.

\bibitem[Bi et~al.(2016)Bi, Chang, Lin, Qin, and Zhang]{bi2016efficient}
Fei Bi, Lijun Chang, Xuemin Lin, Lu~Qin, and Wenjie Zhang.
\newblock Efficient subgraph matching by postponing cartesian products.
\newblock In \emph{Proceedings of the 2016 International Conference on Management of Data}, pages 1199--1214, 2016.

\bibitem[Cannataro and Guzzi(2012)]{cannataro2012data}
Mario Cannataro and Pietro~H Guzzi.
\newblock \emph{Data management of protein interaction networks}, volume~17.
\newblock John Wiley \& Sons, 2012.

\bibitem[Carletti et~al.(2017)Carletti, Foggia, Saggese, and Vento]{carletti2017introducing}
Vincenzo Carletti, Pasquale Foggia, Alessia Saggese, and Mario Vento.
\newblock Introducing vf3: A new algorithm for subgraph isomorphism.
\newblock In \emph{Graph-Based Representations in Pattern Recognition: 11th IAPR-TC-15 International Workshop, GbRPR 2017, Anacapri, Italy, May 16--18, 2017, Proceedings 11}, pages 128--139. Springer, 2017.

\bibitem[Cho et~al.(2014)Cho, van Merri{\"e}nboer, Gulcehre, Bahdanau, Bougares, Schwenk, and Bengio]{cho-etal-2014-learning}
Kyunghyun Cho, Bart van Merri{\"e}nboer, Caglar Gulcehre, Dzmitry Bahdanau, Fethi Bougares, Holger Schwenk, and Yoshua Bengio.
\newblock Learning phrase representations using {RNN} encoder{--}decoder for statistical machine translation.
\newblock In Alessandro Moschitti, Bo~Pang, and Walter Daelemans, editors, \emph{Proceedings of the 2014 Conference on Empirical Methods in Natural Language Processing ({EMNLP})}, pages 1724--1734, Doha, Qatar, October 2014. Association for Computational Linguistics.
\newblock \doi{10.3115/v1/D14-1179}.
\newblock URL \url{https://aclanthology.org/D14-1179}.

\bibitem[Cordella et~al.(2004)Cordella, Foggia, Sansone, and Vento]{cordella2004sub}
Luigi~P Cordella, Pasquale Foggia, Carlo Sansone, and Mario Vento.
\newblock A (sub) graph isomorphism algorithm for matching large graphs.
\newblock \emph{IEEE transactions on pattern analysis and machine intelligence}, 26\penalty0 (10):\penalty0 1367--1372, 2004.

\bibitem[Fan(2012)]{fan2012graph}
Wenfei Fan.
\newblock Graph pattern matching revised for social network analysis.
\newblock In \emph{Proceedings of the 15th International Conference on Database Theory}, pages 8--21, 2012.

\bibitem[Gasteiger et~al.(2019)Gasteiger, Bojchevski, and Günnemann]{gasteiger2018combining}
Johannes Gasteiger, Aleksandar Bojchevski, and Stephan Günnemann.
\newblock Predict then propagate: Graph neural networks meet personalized pagerank.
\newblock In \emph{International Conference on Learning Representations}, 2019.
\newblock URL \url{https://openreview.net/forum?id=H1gL-2A9Ym}.

\bibitem[Hamilton et~al.(2017)Hamilton, Ying, and Leskovec]{hamilton2017inductive}
Will Hamilton, Zhitao Ying, and Jure Leskovec.
\newblock Inductive representation learning on large graphs.
\newblock \emph{Advances in neural information processing systems}, 30, 2017.

\bibitem[Han et~al.(2019)Han, Kim, Gu, Park, and Han]{han2019efficient}
Myoungji Han, Hyunjoon Kim, Geonmo Gu, Kunsoo Park, and Wook-Shin Han.
\newblock Efficient subgraph matching: Harmonizing dynamic programming, adaptive matching order, and failing set together.
\newblock In \emph{Proceedings of the 2019 International Conference on Management of Data}, SIGMOD '19, page 1429–1446. Association for Computing Machinery, 2019.
\newblock ISBN 9781450356435.
\newblock \doi{10.1145/3299869.3319880}.
\newblock URL \url{https://doi.org/10.1145/3299869.3319880}.

\bibitem[Han et~al.(2013)Han, Lee, and Lee]{han2013turboiso}
Wook-Shin Han, Jinsoo Lee, and Jeong-Hoon Lee.
\newblock Turboiso: towards ultrafast and robust subgraph isomorphism search in large graph databases.
\newblock In \emph{Proceedings of the 2013 ACM SIGMOD International Conference on Management of Data}, pages 337--348, 2013.

\bibitem[He and Singh(2008)]{he2008graphs}
Huahai He and Ambuj~K Singh.
\newblock Graphs-at-a-time: query language and access methods for graph databases.
\newblock In \emph{Proceedings of the 2008 ACM SIGMOD international conference on Management of data}, pages 405--418, 2008.

\bibitem[Jim{\'e}nez-Luna et~al.(2020)Jim{\'e}nez-Luna, Grisoni, and Schneider]{jimenez2020drug}
Jos{\'e} Jim{\'e}nez-Luna, Francesca Grisoni, and Gisbert Schneider.
\newblock Drug discovery with explainable artificial intelligence.
\newblock \emph{Nature Machine Intelligence}, 2\penalty0 (10):\penalty0 573--584, 2020.

\bibitem[Kim et~al.(2021)Kim, Choi, Park, Lin, Hong, and Han]{kim2021versatile}
Hyunjoon Kim, Yunyoung Choi, Kunsoo Park, Xuemin Lin, Seok-Hee Hong, and Wook-Shin Han.
\newblock Versatile equivalences: Speeding up subgraph query processing and subgraph matching.
\newblock In \emph{Proceedings of the 2021 International Conference on Management of Data}, SIGMOD '21, page 925–937, New York, NY, USA, 2021. Association for Computing Machinery.
\newblock ISBN 9781450383431.
\newblock \doi{10.1145/3448016.3457265}.
\newblock URL \url{https://doi.org/10.1145/3448016.3457265}.

\bibitem[Kipf and Welling(2017)]{kipf2016semi}
Thomas~N. Kipf and Max Welling.
\newblock Semi-supervised classification with graph convolutional networks.
\newblock In \emph{International Conference on Learning Representations}, 2017.
\newblock URL \url{https://openreview.net/forum?id=SJU4ayYgl}.

\bibitem[Lewis(1983)]{lewis1983michael}
Harry~R Lewis.
\newblock Michael r. $\pi$garey and david s. johnson. computers and intractability. a guide to the theory of np-completeness. wh freeman and company, san francisco1979, x+ 338 pp.
\newblock \emph{The Journal of Symbolic Logic}, 48\penalty0 (2):\penalty0 498--500, 1983.

\bibitem[Lofgren(2015)]{lofgren2015efficient}
Peter Lofgren.
\newblock \emph{Efficient Algorithms for Personalized PageRank}.
\newblock {PhD} thesis, Stanford University, 2015.

\bibitem[Lou et~al.(2020)Lou, You, Wen, Canedo, Leskovec, et~al.]{lou2020neural}
Zhaoyu Lou, Jiaxuan You, Chengtao Wen, Arquimedes Canedo, Jure Leskovec, et~al.
\newblock Neural subgraph matching.
\newblock \emph{arXiv preprint arXiv:2007.03092}, 2020.

\bibitem[Luong et~al.(2015)Luong, Pham, and Manning]{luong-etal-2015-effective}
Thang Luong, Hieu Pham, and Christopher~D. Manning.
\newblock Effective approaches to attention-based neural machine translation.
\newblock In \emph{Proceedings of the 2015 Conference on Empirical Methods in Natural Language Processing}, pages 1412--1421, Lisbon, Portugal, September 2015. Association for Computational Linguistics.
\newblock \doi{10.18653/v1/D15-1166}.
\newblock URL \url{https://aclanthology.org/D15-1166}.

\bibitem[Morris et~al.(2020)Morris, Kriege, Bause, Kersting, Mutzel, and Neumann]{Morris2020}
Christopher Morris, Nils~M. Kriege, Franka Bause, Kristian Kersting, Petra Mutzel, and Marion Neumann.
\newblock Tudataset: A collection of benchmark datasets for learning with graphs.
\newblock In \emph{ICML 2020 Workshop on Graph Representation Learning and Beyond (GRL+ 2020)}, 2020.
\newblock URL \url{www.graphlearning.io}.

\bibitem[Roy et~al.(2022)Roy, Velugoti, Chakrabarti, and De]{roy2022interpretable}
Indradyumna Roy, Venkata Sai Baba~Reddy Velugoti, Soumen Chakrabarti, and Abir De.
\newblock Interpretable neural subgraph matching for graph retrieval.
\newblock In \emph{Proceedings of the AAAI Conference on Artificial Intelligence}, volume~36, pages 8115--8123, June 2022.
\newblock \doi{10.1609/aaai.v36i7.20784}.
\newblock URL \url{https://ojs.aaai.org/index.php/AAAI/article/view/20784}.

\bibitem[Sahu et~al.(2020)Sahu, Mhedhbi, Salihoglu, Lin, and {\"O}zsu]{sahu2020ubiquity}
Siddhartha Sahu, Amine Mhedhbi, Semih Salihoglu, Jimmy Lin, and M~Tamer {\"O}zsu.
\newblock The ubiquity of large graphs and surprising challenges of graph processing: extended survey.
\newblock \emph{The VLDB journal}, 29\penalty0 (2):\penalty0 595--618, 2020.

\bibitem[Scarselli et~al.(2008)Scarselli, Gori, Tsoi, Hagenbuchner, and Monfardini]{scarselli2008graph}
Franco Scarselli, Marco Gori, Ah~Chung Tsoi, Markus Hagenbuchner, and Gabriele Monfardini.
\newblock The graph neural network model.
\newblock \emph{IEEE transactions on neural networks}, 20\penalty0 (1):\penalty0 61--80, 2008.

\bibitem[Shang et~al.(2008)Shang, Zhang, Lin, and Yu]{shang2008taming}
Haichuan Shang, Ying Zhang, Xuemin Lin, and Jeffrey~Xu Yu.
\newblock Taming verification hardness: an efficient algorithm for testing subgraph isomorphism.
\newblock \emph{Proceedings of the VLDB Endowment}, 1\penalty0 (1):\penalty0 364--375, 2008.

\bibitem[Sun and Luo(2020)]{sun2020subgraph}
Shixuan Sun and Qiong Luo.
\newblock Subgraph matching with effective matching order and indexing.
\newblock \emph{IEEE Transactions on Knowledge and Data Engineering}, 34\penalty0 (1):\penalty0 491--505, 2020.

\bibitem[Sun et~al.(2022)Sun, Li, Du, Ning, and Chen]{sun2022subgraph}
Yunhao Sun, Guanyu Li, Jingjing Du, Bo~Ning, and Heng Chen.
\newblock A subgraph matching algorithm based on subgraph index for knowledge graph.
\newblock \emph{Frontiers of Computer Science}, 16:\penalty0 1--18, 2022.

\bibitem[Sussman et~al.(2019)Sussman, Park, Priebe, and Lyzinski]{sussman2019matched}
Daniel~L Sussman, Youngser Park, Carey~E Priebe, and Vince Lyzinski.
\newblock Matched filters for noisy induced subgraph detection.
\newblock \emph{IEEE transactions on pattern analysis and machine intelligence}, 42\penalty0 (11):\penalty0 2887--2900, 2019.

\bibitem[Sys{\L};o(1982)]{SYSLO198291}
Maciej~M. Sys{\L};o.
\newblock The subgraph isomorphism problem for outerplanar graphs.
\newblock \emph{Theoretical Computer Science}, 17\penalty0 (1):\penalty0 91--97, 1982.
\newblock ISSN 0304-3975.
\newblock \doi{https://doi.org/10.1016/0304-3975(82)90133-5}.
\newblock URL \url{https://www.sciencedirect.com/science/article/pii/0304397582901335}.

\bibitem[Veli{\v{c}}kovi{\'{c}} et~al.(2018)Veli{\v{c}}kovi{\'{c}}, Cucurull, Casanova, Romero, Li{\`{o}}, and Bengio]{velickovic2018graph}
Petar Veli{\v{c}}kovi{\'{c}}, Guillem Cucurull, Arantxa Casanova, Adriana Romero, Pietro Li{\`{o}}, and Yoshua Bengio.
\newblock {Graph Attention Networks}.
\newblock \emph{International Conference on Learning Representations}, 2018.
\newblock URL \url{https://openreview.net/forum?id=rJXMpikCZ}.

\bibitem[Vu and Thai(2020)]{vu2020pgm}
Minh Vu and My~T Thai.
\newblock Pgm-explainer: Probabilistic graphical model explanations for graph neural networks.
\newblock \emph{Advances in neural information processing systems}, 33:\penalty0 12225--12235, 2020.

\bibitem[Wang et~al.(2021{\natexlab{a}})Wang, Ying, Huang, and Leskovec]{ijcai2021p425}
Guangtao Wang, Rex Ying, Jing Huang, and Jure Leskovec.
\newblock Multi-hop attention graph neural networks.
\newblock In Zhi-Hua Zhou, editor, \emph{Proceedings of the Thirtieth International Joint Conference on Artificial Intelligence, {IJCAI-21}}, pages 3089--3096. International Joint Conferences on Artificial Intelligence Organization, 8 2021{\natexlab{a}}.
\newblock \doi{10.24963/ijcai.2021/425}.
\newblock URL \url{https://doi.org/10.24963/ijcai.2021/425}.
\newblock Main Track.

\bibitem[Wang et~al.(2021{\natexlab{b}})Wang, Ying, Huang, and Leskovec]{wang2020multi}
Guangtao Wang, Rex Ying, Jing Huang, and Jure Leskovec.
\newblock Multi-hop attention graph neural networks.
\newblock In Zhi-Hua Zhou, editor, \emph{Proceedings of the Thirtieth International Joint Conference on Artificial Intelligence, {IJCAI-21}}, pages 3089--3096. International Joint Conferences on Artificial Intelligence Organization, 8 2021{\natexlab{b}}.
\newblock \doi{10.24963/ijcai.2021/425}.
\newblock URL \url{https://doi.org/10.24963/ijcai.2021/425}.
\newblock Main Track.

\bibitem[Wu et~al.(2023)Wu, Li, Jin, Jiang, Radev, Niu, and Li]{wu2023explaining}
Fang Wu, Siyuan Li, Xurui Jin, Yinghui Jiang, Dragomir Radev, Zhangming Niu, and Stan~Z. Li.
\newblock Rethinking explaining graph neural networks via non-parametric subgraph matching.
\newblock In Andreas Krause, Emma Brunskill, Kyunghyun Cho, Barbara Engelhardt, Sivan Sabato, and Jonathan Scarlett, editors, \emph{Proceedings of the 40th International Conference on Machine Learning}, volume 202 of \emph{Proceedings of Machine Learning Research}, pages 37511--37523. PMLR, Jul 2023.
\newblock URL \url{https://proceedings.mlr.press/v202/wu23j.html}.

\bibitem[Xu et~al.(2016)Xu, Wickramarathne, and Chawla]{xu2016representing}
Jian Xu, Thanuka~L Wickramarathne, and Nitesh~V Chawla.
\newblock Representing higher-order dependencies in networks.
\newblock \emph{Science advances}, 2\penalty0 (5):\penalty0 e1600028, 2016.

\bibitem[Xu et~al.(2019{\natexlab{a}})Xu, Hu, Leskovec, and Jegelka]{xu2018powerful}
Keyulu Xu, Weihua Hu, Jure Leskovec, and Stefanie Jegelka.
\newblock How powerful are graph neural networks?
\newblock In \emph{International Conference on Learning Representations}, 2019{\natexlab{a}}.
\newblock URL \url{https://openreview.net/forum?id=ryGs6iA5Km}.

\bibitem[Xu et~al.(2019{\natexlab{b}})Xu, Wang, Yu, Feng, Song, Wang, and Yu]{xu2019cross}
Kun Xu, Liwei Wang, Mo~Yu, Yansong Feng, Yan Song, Zhiguo Wang, and Dong Yu.
\newblock Cross-lingual knowledge graph alignment via graph matching neural network.
\newblock \emph{arXiv preprint arXiv:1905.11605}, 2019{\natexlab{b}}.

\bibitem[Yang et~al.(2023)Yang, Ge, Nguyen, Molitor, Moorman, and Bertozzi]{yang2023structural}
Dominic Yang, Yurun Ge, Thien Nguyen, Denali Molitor, Jacob~D Moorman, and Andrea~L Bertozzi.
\newblock Structural equivalence in subgraph matching.
\newblock \emph{IEEE Transactions on Network Science and Engineering}, 2023.

\bibitem[Yuan et~al.(2020)Yuan, Tang, Hu, and Ji]{yuan2020xgnn}
Hao Yuan, Jiliang Tang, Xia Hu, and Shuiwang Ji.
\newblock Xgnn: Towards model-level explanations of graph neural networks.
\newblock In \emph{Proceedings of the 26th ACM SIGKDD International Conference on Knowledge Discovery \& Data Mining}, pages 430--438, 2020.

\bibitem[Zhang and Lee(2019)]{zhang2019deep}
Zhen Zhang and Wee~Sun Lee.
\newblock Deep graphical feature learning for the feature matching problem.
\newblock In \emph{2019 IEEE/CVF International Conference on Computer Vision (ICCV)}, pages 5086--5095, 2019.
\newblock \doi{10.1109/ICCV.2019.00519}.

\end{thebibliography}

\end{document}